\definecolor{darkolivegreen}{rgb}{0.33, 0.42, 0.18}
\def\ours{{LiGO}\xspace}
\title{Learning to Grow Pretrained Models for \\ Efficient Transformer Training}
\author{
\hspace{-2mm} \text{Peihao Wang\textsuperscript{\textnormal{1}}\thanks{Work done during an internship at MIT-IBM Watson AI Lab.}\hspace{3mm} Rameswar Panda\textsuperscript{\textnormal{2}} \hspace{2mm} Lucas Torroba Hennigen\textsuperscript{\textnormal{4}} \hspace{2mm} Philip Greengard\textsuperscript{\textnormal{3}}} \\
\hspace{-0mm}\textbf{Leonid Karlinsky\textsuperscript{\textnormal{2}} \hspace{2mm} Rogerio Feris\textsuperscript{\textnormal{2}} \hspace{2mm} David D. Cox\textsuperscript{\textnormal{2}} \hspace{2mm} Zhangyang Wang\textsuperscript{\textnormal{1}} \hspace{2mm} Yoon Kim\textsuperscript{\textnormal{4}}}  \vspace{2mm} \\
\textsuperscript{1}University of Texas at Austin,
\textsuperscript{2}MIT-IBM Watson AI Lab,
\textsuperscript{3}Columbia University,
\textsuperscript{4}MIT  \\
\small{\texttt{\{peihaowang,atlaswang\}@utexas.edu,}}
\small{\texttt{\{rpanda, leonidka, david.d.cox\}@ibm.com,}} \\
\small{\texttt{rsferis@us.ibm.com,}}
\small{\texttt{pg2118@columbia.edu,}}
\small{\texttt{\{lucastor, yoonkim\}@mit.edu}}
}
\newtheorem{definition}{Definition}
\newtheorem{theorem}{Theorem}
\newtheorem{proposition}[theorem]{Proposition}
\newcommand{\Mat}{\boldsymbol}
\newcommand{\Set}{\mathcal}
\newcommand{\real}{\mathbb{R}}
\DeclareMathOperator{\diag}{diag}
\DeclareMathOperator{\V}{vec}
\DeclareMathOperator*{\argmin}{arg\,min}
\begin{document}
\setlength{\abovedisplayskip}{4pt}
\setlength{\belowdisplayskip}{4pt}
\setlength{\abovedisplayshortskip}{4pt}
\setlength{\belowdisplayshortskip}{4pt}

\maketitle
\vspace{-5mm}
\begin{abstract}
\vspace{-2mm}
Scaling transformers has led to significant breakthroughs in many domains, leading to a paradigm in which larger versions of existing models are trained and released on a periodic basis. New instances of such models are typically trained completely from scratch, despite the fact that they are often just scaled-up versions of their smaller counterparts. How can we use the implicit knowledge in the parameters of smaller, extant models to enable faster training of newer, larger models? This paper describes an approach for accelerating transformer training by {learning to grow} pretrained transformers, where we learn to linearly map  the parameters of the smaller model to initialize the larger model. For tractable learning, we factorize the linear transformation as a composition of  (linear) width- and  depth-growth operators, and further employ a  Kronecker factorization of these growth operators to encode architectural knowledge. Extensive experiments across both language and vision transformers demonstrate that our learned Linear Growth Operator (LiGO)  can save up to $50\%$ computational cost of training from scratch, while also consistently outperforming strong baselines that also reuse smaller pretrained models to initialize larger models.\footnote{Project page: \url{https://vita-group.github.io/LiGO/}}
\vspace{-4mm}

\end{abstract}

\section{Introduction}
\vspace{-2mm}
The transformer architecture \citep{vaswani2017attention} has emerged as a general purpose architecture for modeling many structured domains \citep{devlin2018bert,brown2020language,Rivese2016239118,doso2021vit,touvron2021training}. Perhaps more so than other architectures, the transformer empirically seems to have inductive biases that make it especially amenable to scaling \citep{rosenfeld2019constructive,kaplan2020scaling}, which  has led to a paradigm in which larger versions of smaller, existing models are trained and released on a periodic basis (e.g., the GPT lineage of models \citep{radford2018improving,radford2019language,brown2020language}). New instances of such models are typically trained completely from scratch, despite the fact that they are often scaled-up versions of their smaller counterparts. Given the  compute required to train even the smaller models, we argue that training each model from scratch is wasteful, and that prior knowledge implicit in the parameters of smaller pretrained models should be leveraged to enable faster training of larger models.

One  approach to this problem is through the lens of \emph{model growth}, wherein a smaller model's pretrained parameters are used to initialize a subset of the larger model's parameters. While earlier works generally froze the parameters initialized from the pretrained model and only trained the new (randomly initialized) parameters \citep{fahlman1989,fahlman1990,gutstein2008}, subsequent work has shown that  copying a subset of the pretrained parameters to initialize the new parameters and then  finetuning the entire network significantly accelerates training and sometimes even leads to better performance \citep{chen2015net2net}. When applied to modern transformers, these mechanisms roughly translate to a  depth-expansion operator in which pretrained models are stacked (or combined with identity layers) to initialize deeper transformers \citep{gong2019efficient,yang2020progressively}, and a width-expansion operator in which the smaller model's matrices are copied to initialize the larger model's matrices (e.g., in block-diagonal fashion) \citep{chen2021bert2bert,gu2020transformer}.

Noting the empirical effectiveness of such recipes, we observe that existing  mechanisms generally do not have a learning component (e.g., randomly copying over neurons for width-expansion or stacking consecutive layers for depth-expansion). This paper instead proposes an efficient, data-driven approach for \emph{learning to grow} transformers. In particular, our  approach frames the problem of initializing the larger model's parameters as learning a linear mapping from the smaller model's parameters, i.e., $\boldsymbol{\Theta}^{(large)} = \Mat{M} \boldsymbol{\Theta}^{(small)}$ where $\boldsymbol{\Theta}^{(small)}$ and $\boldsymbol{\Theta}^{(large)}$ are the vectorized parameters of the small/large models. Due to the high dimensionality of the parameters, this mapping is completely intractable to learn without any restrictions on $\Mat{M}$. We thus factorize the linear mapping to be a composition of sparse width- and depth-expansion operators, $\Mat{M} = \Mat{L}_{depth} \Mat{R}_{width}$, where both width and depth matrices are further factorized to be a Kronecker product of smaller matrices that express architectural knowledge (e.g., through grouping parameters by layers and neurons).
We show that our growth operators can represent existing approaches such as layer-stacking and neuron-copying as special cases. We find that with a small amount of learning on $\Mat{M}$ (e.g., 100 gradient steps) to initialize the larger model, we can significantly accelerate training of both vision and language transformers. Figure~\ref{fig:lego} illustrates our approach.

\begin{figure}
\vspace{-7mm}
    \centering
    \includegraphics[width=0.97\linewidth]{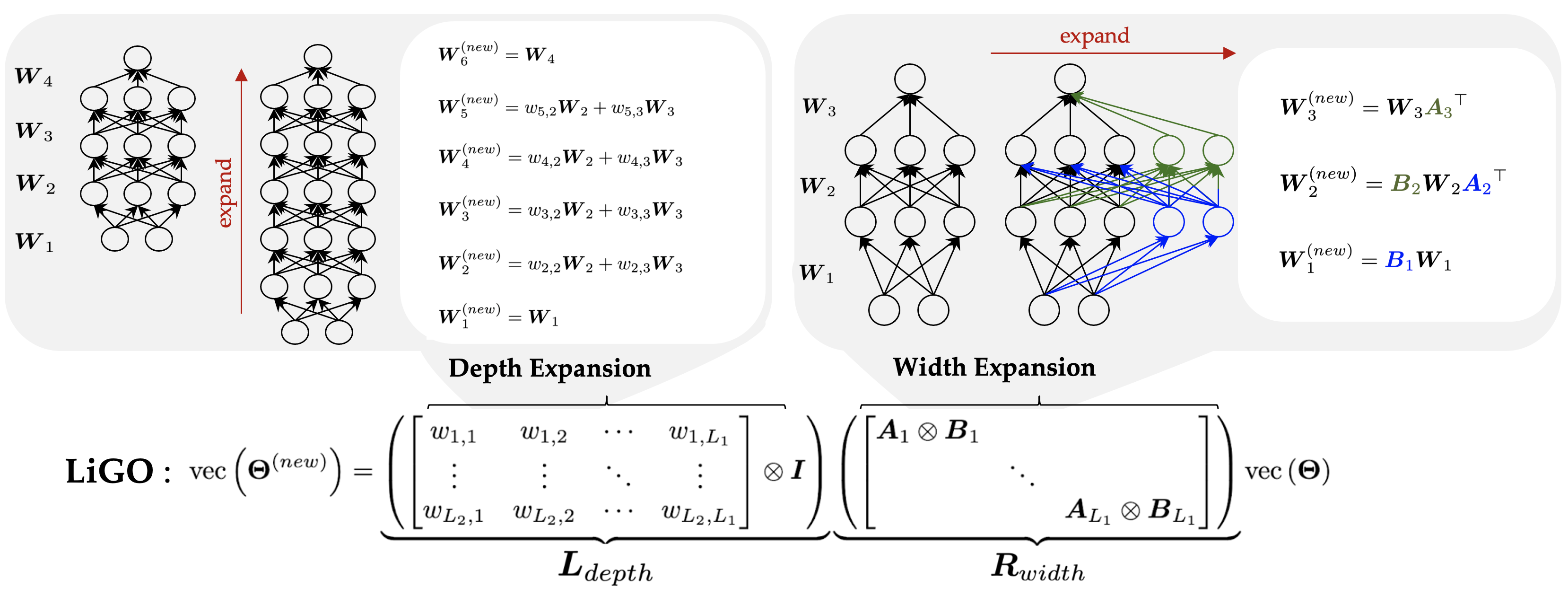}
    \vspace{-3mm}
    \caption{\small Our linear growth operator (LiGO) accelerates training by using the weights of a smaller model $\Mat{\Theta}$ to initialize  the weights of the larger model $\Mat{\Theta}^{(new)}$. LiGO is parameterized as a sparse linear map $\Mat{M}$ that can be decomposed into  width- and depth-expansion operators. The width-operator $\Mat{R}_{width}$ and depth-operator $\Mat{L}_{depth}$  are structured matrices obtained from Kronecker products of smaller matrices which encode architectural knowledge by grouping parameters into layers and neurons. While we show the expansion operators for simple multi-layer perceptrons for illustrative purposes, in practice we apply LiGO to enable faster training of transformer networks. In our approach, we learn the growth matrix $\Mat{M}$ with a 100 steps of SGD, use this to initialize the larger model, and then continue training as usual. Best viewed in color.}
    \label{fig:lego}
    \vspace{-4mm}
\end{figure}

We apply our learned linear growth operator (LiGO) to popular families of  models---BERT~\citep{devlin2018bert}, RoBERTa~\citep{liu2019roberta}, GPT2~\citep{radford2019language}, and ViT~\citep{doso2021vit,touvron2021training,touvron2021going}---and find that \ours can consistently improve transformer training efficiency over the traditional way of training from scratch across domains and model sizes.
For instance, \ours saves $44.7\%$ and $22.5\%$  FLOPs for training BERT-Base and GPT2-Medium from scratch by reusing pretrained smaller models  that are half as big. Similarly, for vision transformers, when using DeiT-S~\citep{touvron2021training} for initialization, \ours yields $55\%$ savings in FLOPs with no performance drop on ImageNet~\citep{deng2009imagenet}. These FLOPs savings directly translate to similar wall clock savings. We further find that models trained using \ours achieve similar performance  to the trained-from-scratch baselines when transferred to  downstream tasks.

\vspace{-2mm}
\section{Related Work}
\vspace{-2mm}

\textbf{Efficient training.} Efficient training of transformers has been studied from multiple perspectives. Some methods that are orthogonal to our work include mixed precision training~\citep{shoeybi2019megatron}, large batch optimization~\citep{you2019large}, distributed training~\citep{huang2019gpipe}, and dropping layers~\citep{zhang2020accelerating} or tokens~\citep{hou2022token}. Knowledge inheritance~\citep{qin2021knowledge} explores knowledge distillation during pretraining to efficiently learn larger transformers. Progressive training, which first trains a small transformer with few layers and then gradually expands by stacking layers, has also been applied to accelerate transformer training \citep{gong2019efficient,yang2020progressively,li2022automated,shen2022staged}. Net2Net~\cite{chen2015net2net} uses function-preserving transformations to grow width by copying neurons and depth by using identity layers. Recently, bert2BERT~\citep{chen2021bert2bert} extends Net2Net to transformers. In contrast to these approaches, our approach {learns} to (linearly) transform the parameters of a smaller model to initialize a larger model. While there is a line of work on learning to grow neural networks in a data-driven way, these methods are in general difficult to apply to modern-scale transformers since they (for example) involve growing a single neuron at a time or employ expensive optimization/search procedures \citep{wei2016network,cai2018efficient,wu2019splitting,wu2021firefly,evci2022gradmax}.

\vspace{-1mm}
\textbf{Network initialization.} Our work is also related to work on  neural network initialization. Existing works include controlling the norm of the parameters~\citep{mishkin2015all,kilcher2018escaping,dai2019nest,wu2019splitting,glorot2010understanding} or replacing the normalization layers~\citep{brock2021high,zhang2019fixup,huang2020improving}. MetaInit~\citep{dauphin2019metainit} proposes an automatic method that optimizes the norms of weight tensors to minimize the gradient quotient on minibatches of random Gaussian samples. GradInit~\citep{zhu2021gradinit} learns to initialize larger networks by adjusting norm of each layer.  Our work focuses on using smaller pretrained transformers to better initialize larger transformers, which remains an understudied problem. 

\vspace{-1mm}
\textbf{Structured matrices.} Finally, our work is also related to structured matrices which are typically used to replace dense weight matrices for reducing training and inference computation cost. Examples include sparse and low rank matrices~\citep{chiu2021low,han2015deep}, Chebyshev matrices~\citep{tang2019chebnet}, Toeplitz matrices~\citep{sindhwani2015structured}, Kronecker-product matrices~\citep{zhang2015fast}, and butterfly matrices~\citep{dao2019learning}. 
A unified framework to learn a broad family of structured matrices is presented in~\cite{sindhwani2015structured}. \citet{dao2022monarch} propose Monarch matrices, which inherit the expressiveness of butterfly matrices and achieve reasonable accuracy-efficiency tradeoffs in many applications. While our approach is inspired by these works, we propose to grow pretrained models by learning structured sparse linear operators with Kronecker factorization, which to our knowledge has not been explored in the literature.

\vspace{-1mm}
\section{Proposed Approach}
\label{sec:methodology}
\vspace{-1mm}

\paragraph{Notation.} We denote the parameters of a neural network with $L$ layers and $D$ dimensions 
as $\Mat{\Theta}_{L,D} = \begin{bmatrix} \Mat{W}_1 & \cdots & \Mat{W}_L \end{bmatrix}^\top \in \real^{LD \times D}$, where $\Mat{W}_l \in \real^{D \times D}$ denotes the weights for the $l$-th layer.\footnote{For notational brevity we assume that each hidden layer has same number of dimensions $D$, but \ours can be straightforwardly generalized to layers with different dimensions (e.g., FFN layers of transformers).}
With slight abuse of notation, we denote the vectorization of $\Mat{\Theta}_{L,D}$ as $\V(\Mat{\Theta}_{L,D})^\top = \begin{bmatrix} \V(\Mat{W}_1)^\top & \cdots & \V(\Mat{W}_L)^\top \end{bmatrix}$.\footnote{We therefore have $\V(\Mat{\Theta}_{L,D})^\top \in \real^{LD^2}$. Our approach is also agnostic with regard to vectorization order.}
Our goal is to re-use the parameters $\Mat{\Theta} = \Mat{\Theta}_{L_1, D_1}$ from a pretrained smaller model to initialize a large model $\Mat{\Theta}^{(new)} = \Mat{\Theta}_{L_2,D_2}$ through a \textit{model growth operator} $M: \real^{L_1 D_1 \times D_1} \rightarrow \real^{L_2 D_2 \times D_2}$ that maps the weights of the smaller network to the weights of the larger one, i.e.,  $\Mat{\Theta}^{(new)} = M(\Mat{\Theta})$ where $L_1 < L_2$ and $D_1 < D_2$. After model growth, we adopt $\Mat{\Theta}^{(new)}$ as the initialization of the large model and train it using standard recipes.

\vspace{-1mm}
\subsection{Existing Growth Operators} \label{sec:prelim}

Existing works have separately established model growth operators for depth ($L_1 < L_2, D_1 = D_2$) and width ($L_1 = L_2, D_1 < D_2$). We summarize these methods below.

\vspace{-1mm}
\textbf{Depth expansion.} StackBERT \citep{gong2019efficient} proposes to duplicate the smaller model 
to double the depth, based on the observation that upper layers share similar functionality with the lower layers. In contrast, interpolation-based depth expansion methods \citep{chang2017multi, dong2020towards} interleave every layer to form a deeper model, which can be roughly interpreted as simulating a finer-grained solution to the original dynamical system from a neural ODE perspective \citep{chen2018neural}. Letting $L_2 = k L_1$, the two methods' growth operators can be formulated as:
\begin{align} \label{eqn:depth_stack}
\text{(StackBERT)}\ \Mat{W}^{(new)}_{l} = \Mat{W}_{l\ \mathrm{mod}\ L_1},
\quad
\text{(Interpolation)}\ \Mat{W}^{(new)}_{i} = \Mat{W}_{\lfloor l / k \rfloor},
\quad
\forall l \in [L_2].
\end{align}

\vspace{-1mm}
\textbf{Width expansion.} Net2Net~\citep{chen2015net2net} expands the width of neural networks by randomly copying neurons while preserving output values via normalization. This can be seen as growing a matrix associated with a particular layer by duplicating the columns and rows of its weight matrix. Suppose a layer has weight matrix $\Mat{W}_{l} \in \real^{D_1 \times D_1}$.\footnote{We define a single layer as $f_l(\Mat{x}) = \Mat{W}_l \Mat{x} + \Mat{b}_l$, where the row number of $\Mat{W}_{l}$ corresponds to the output dimension, and the column number of  $\Mat{W}_{l}$ corresponds to the input dimension.}
To expand it to a matrix $\Mat{W}^{(new)}_{l} \in \real^{D_2 \times D_2}$  ($D_2 > D_1$), Net2Net copies $\Mat{W}_l$ to its upper-left corner of $\Mat{W}^{(new)}_{l}$, fills the new columns via a random selection matrix $\Mat{S}_l$, and finally duplicates and normalizes rows according to the selection matrix from the previous layer. Formally, the growth operator of Net2Net can be written as:
\begin{align} \label{eqn:width_net2net}
\text{(Net2Net)}\ \Mat{W}^{(new)}_l = \begin{bmatrix} \Mat{I} \\ \Mat{S}_{l-1}^\top \end{bmatrix} \Mat{D}_l^{-1} \Mat{W}_l \begin{bmatrix} \Mat{I} & \Mat{S}_l \end{bmatrix}, \quad \Mat{D}_l = \diag(\Mat{S}_{l-1}\Mat{1}) + \Mat{I}, \quad \forall l \in [L_2]
\end{align}
where $\Mat{S}_l \in \{0, 1\}^{D_1 \times (D_2 - D_1)}$ is a random selection matrix. The diagonal of $\Mat{D}_l$ is a $D_1$-dimensional histogram, whose $i$-th entry indicates number of times $i$-th column of $\Mat{W}_l$ was copied.

\vspace{-1mm}
\subsection{Learning to Grow with a Structured  Linear Growth Operator}
While existing operators have been empirically successful in accelerating transformer-based models such as BERT \citep{gong2019efficient,chen2021bert2bert}, we observe that generally do not have a learning component and perform the depth- and width-expansions separately. In this section we introduce a general framework for  learning to grow with a linear growth operator (\ours), which generalizes  existing operators by combining the width- and depth-growth operators in a data-driven way. 

We can formulate the  problem  of initializing the weights of the larger model $\Mat{\Theta}^{(new)}$ from the smaller model $\Mat{\Theta}$ through the following optimization problem,
\begin{align} \label{eqn:learn_init}
\argmin_{M} \, \mathbb{E}_{\Mat{x} \sim \Set{D}} \,\, \mathcal{L}(\Mat{x}; \Mat{\Theta}^{(new)}), \quad
\text{ subject to } \Mat{\Theta}^{(new)} = M (\Mat{\Theta}),
\end{align}
where $\Set{D}$ is the data distribution and $\mathcal{L}$ is the loss function. It is of course intractable to optimize over the entire operator space, and thus we further simplify the function $M$ to be a linear transformation, which results in the following formulation,
\begin{align} \label{eqn:gen_grow}
\V(\Mat{\Theta}^{(new)}) = \V(M(\Mat{\Theta})) = \Mat{M} \V(\Mat{\Theta}), \quad \Mat{M} \in \real^{L_2D_2^2 \times L_1D_1^2}.
\end{align}

This simplified objective is still completely infeasible to apply to contemporary neural networks where $L_1D_1$ can easily be in the hundreds of millions. We therefore propose an  efficient parameterization of $\Mat{M}$ for  tractable learning.

\vspace{-1mm}
\subsubsection{Decomposition along Depth and Width} \label{sec:dw_factor}
Our first step is to decompose the \ours operator as $\Mat{M} = \Mat{L}_{depth} \Mat{R}_{width}$, where $\Mat{L}_{depth}$ and $\Mat{R}_{width}$ expand the depth and width of model separately. Concretely, we decompose $\Mat{M}$ as
\begin{align} \label{eqn:dw_factor}
\Mat{M} = \underbrace{\begin{bmatrix}
\diag(\Mat{\ell}_{1,1}) & \cdots & \diag(\Mat{\ell}_{1,L_1}) \\
\vdots & \ddots & \vdots \\
\diag(\Mat{\ell}_{L_2,1}) & \cdots & \diag(\Mat{\ell}_{L_2, L_1}) \\
\end{bmatrix}}_{\Mat{L}_{depth}}
\underbrace{\begin{bmatrix}
\Mat{R}_{1} & & \\
& \ddots &  \\
& & \Mat{R}_{L_1} \\
\end{bmatrix}}_{\Mat{R}_{width}}.
\end{align}
where $\Mat{R}_l \in \real^{D_2^2 \times D_1^2}$ and $\Mat{\ell}_{i,j} \in \real^{D_2^2}$. 
In the above, $\Mat{L}_{depth}$ is an array of diagonal matrices and $\Mat{R}_{width}$ is a block-diagonal matrix, i.e., both matrices are highly structured and sparse.
When applying $\Mat{R}_{width}$ to weights $\V(\Mat{\Theta})$, the parameters of each layer will be transformed independently via $\V(\Mat{W}^{(new)}_{l}) = \Mat{R}_l \V(\Mat{W}_{l})$ and lifted to a higher dimension. The $l$-th row block of $\Mat{L}_{depth}$ corresponds to  the growth operator of  $l$-th layer, which amounts to linearly combining all layers of the smaller model via $\V(\Mat{W}^{(new)}_{l})_k = \sum_{l'=1}^{L_1} (\Mat{\ell}_{l, l'})_{k} \V(\Mat{W}_{l})_{k}$.
By this factorization, we can effectively reduce the complexity of the \ours operator from $\mathcal{O}(D_1^2 L_1  D_2^2 L_2)$ to $\mathcal{O}(D_1^2 D_2^2 L_1)$ and encode architectural knowledge by grouping parameters by layers. 
Later in Section~\ref{sec:expressiveness}, this representation is also shown to preserve high representation power owing to its connection with Monarch matrices~\citep{dao2022monarch,dao2019learning}.

\vspace{-1mm}
\subsubsection{Parameter Sharing via Kronecker Factorization}
The above \ours operator  requires $\mathcal{O}(D_1^2 D_2^2 L_1)$ parameters for $\Mat{R}_{width}$ and $\mathcal{O}(L_1 L_2 D_2^2)$ for $\Mat{L}_{depth}$. The width operator $\Mat{R}_{width}$ is thus still prohibitively expensive given that $D_1$ (and $D_2$) can easily be in the hundreds or thousands. In this section, we propose a  Kronecker factorization  to further reduce the number of learnable parameters for each growth operator.

\textbf{Depth.} For depth, we treat an entire layer as a single group and construct a new layer by combining existing layers,  effectively tying parameters for all neurons in same layer. Formally, each block in $\Mat{L}_{depth}$ is simplified to be $\diag(\Mat{\ell}_{i,j}) = w_{i,j} \Mat{I}$. Then the entire matrix can be written as a Kronecker factorization, $\Mat{L}_{depth} = \Mat{w} \otimes \Mat{I}$,
where $\Mat{w} \in \real^{L_2 \times L_1}$ is a matrix whose entry $w_{i,j}$ indicates blending weights of $j$-th layer of the small model to form $i$-th layer of the large model. This strategy reduces the number of parameters in $\Mat{L}_{depth}$ to $\mathcal{O}(L_1 L_2)$, and is shown on left-hand side of Figure~\ref{fig:lego}.

\textbf{Width.} For width, we decompose each diagonal block of width expansion operator $\Mat{R}_{width}$ using the Kronecker factorization $\Mat{R}_l = \Mat{A}_l \otimes \Mat{B}_l$, where $\Mat{A}_l, \Mat{B}_l  \in \real^{D_2 \times D_1}$.
Since $\V(\Mat{C} \Mat{A} \Mat{B}) = (\Mat{B}^\top \otimes \Mat{C}) \V(\Mat{A})$ \citep{schacke2004kronecker}, we then have,
\begin{align}
\label{eqn:kron_R} \Mat{R}_{width} \V(\Mat{\Theta})
&= \begin{bmatrix}
\Mat{A}_1 \otimes \Mat{B}_1 & & \\
& \ddots \\
& & \Mat{\Mat{A}}_{L_1} \otimes \Mat{B}_{L_1} 
\end{bmatrix} \V(\Mat{\Theta}) \\
&= \V\left(\begin{bmatrix}
\Mat{B}_1 \Mat{W}_{1} \Mat{A}_1^\top & \cdots &
\Mat{B}_{L_1} \Mat{W}_{L_1} \Mat{A}_{L_1}^\top
\end{bmatrix}^\top\right).
\end{align}
Here we observe that $\Mat{B}_l \Mat{W}_{l} \Mat{A}_l^\top$ performs in- and out-dimension expansion by $\Mat{A}_l$ and $\Mat{B}_l$, respectively. Each new column/row is a linear combination of columns/rows of small model's weight matrix. This factorization, which can be seen as grouping parameters by \emph{neurons}, reduces the number of parameters to $\mathcal{O}(L_1 D_1 D_2)$. Figure~\ref{fig:lego} (right) illustrates LiGO's width-expansion operator.

Altogether, we obtain the final parameterization of \ours operator $\Mat{M}$:
\begin{align} \label{eqn:genral_form}  
\Mat{M} = \underbrace{\left(\begin{bmatrix}
w_{1,1} & w_{1,2} & \cdots & w_{1,L_1} \\ 
\vdots & \vdots & \ddots & \vdots \\
w_{L_2,1} & w_{L_2,2} & \cdots & w_{L_2,L_1} \\
\end{bmatrix} \otimes \Mat{I} \right)}_{\text{Depth expansion}}
\underbrace{\left(\begin{bmatrix}
\Mat{A}_1 \otimes \Mat{B}_1 & & \\
& \ddots \\
& & \Mat{\Mat{A}}_{L_1} \otimes \Mat{B}_{L_1} 
\end{bmatrix}\right)}_{\text{Width expansion}}
\end{align}
We can exploit the factorization to implement the \ours operator (Eq. \ref{eqn:genral_form}) efficiently. 

\textbf{Training.} \ours expands a model in three steps: (1) for each layer, inserting new rows by linearly combining existing rows through $\Mat{B}_{l}$, (2) for each layer, inserting new columns by linearly combining existing columns through $\Mat{A}_{l}$, and then finally (3) reconstructing each layer by linearly combining the weight matrices with $\Mat{w}$ along the depth.  We then run a few steps (e.g., 100 iterations) of SGD  to optimize $\Mat{M}$, which has negligible compute cost relative to regular training. After obtaining $\Mat{M}$, we initialize large model with $\Mat{M} \V(\Mat{\Theta})$, and train parameters $\Mat{\Theta}^{(new)}$ through SGD as usual. Algorithm~\ref{alg:lego_transformer} summarizes a forward pass of \ours with transformer. Finally, as shown in Appendix~\ref{sec:universality} we note that StackBERT (Eq. \ref{eqn:depth_stack}), Interpolation (Eq. \ref{eqn:depth_stack}), and Net2Net (Eq. \ref{eqn:width_net2net}) are all special cases of \ours (Eq. \ref{eqn:genral_form}) with a particular setting of $\Mat{L}_{depth}$ and $\Mat{R}_{width}$.

\vspace{-3mm}
\subsection{\ours for Transformers}
While \ours can be applied to any multi-layer neural network architecture, in this paper we focus on using \ours to grow transformers which have been shown to be particularly amenable to scaling. Below we briefly describe how \ours is applied to the main transformer embedding/attention layers and defer further details (e.g., growing bias vectors, layer norm parameters) to Appendix~\ref{sec:detailed_trans}.

\textbf{Embedding layer.} The embedding layer can be regarded as a linear layer whose inputs are one-hot vectors. We learn a matrix $\Mat{B}^{(emb)}$ to extend its output dimension. This embedding layer is also  used as the final output layer for our transformer language modeling experiments.

\textbf{Attention and feedforward Layers.} An attention layer consists of multi-head attention weights ($\Mat{W}^Q, \Mat{W}^K, \Mat{W}^V$) and a linear projection ($\Mat{W}^O$).
Let $\Mat{A}^{k}_l$ and $\Mat{B}^{k}_l$ where $k \in \{Q, K, V, O\}$ be the $l$-th layer's in- and out-dimension expansion matrices (Eq. \ref{eqn:kron_R}) for the query, key, value, and projection matrices.
To make sure new input and output channels are aligned across modules, we tie the \ours operator as follows: for all $l \in [L_1]$, (1) $\Mat{A}^{k}_l = (\Mat{B}^{(emb)})^\top$ for $\forall k \in \{Q, K ,V\}$, (2) $\Mat{A}^{O}_l = (\Mat{B}_l^{V})^\top$, (3) $\Mat{B}^{O}_l = \Mat{B}^{(emb)}$. The last constraint is added to take into account the  residual connections \citep{chen2021bert2bert}.
We similarly tie parameters for the feed-forward networks, $\Mat{A}^{(fc1)}_l = (\Mat{B}^{(emb)})^{\top}$, $\Mat{A}^{(fc2)}_l = (\Mat{B}^{(fc1)})^{\top}_l$ and $\Mat{B}^{(fc2)}_l = \Mat{B}^{(emb)}$. 
Since transformers make heavy use of residual layers with skip connections, we found that simply using the same $\Mat{B}^{(emb)}$ to parameterize $\Mat{A}_l^k$ and $\Mat{B}_l^k$ for many layers/modules worked well in practice. This  reduces the number of learnable parameters even further and enables fast learning of $\Mat{M}$ on a small amount of data (100 gradient steps).

\vspace{-1mm}
\subsection{Connection to Monarch Matrices} \label{sec:expressiveness}
As shown in Section~\ref{sec:dw_factor}, our depth-width decomposition factorizes $\Mat{M}$ into a multiplication of two structured sparse matrices.
We examine the expressiveness of this factorized representation by relating it to Monarch matrices \citep{dao2022monarch}, defined below.
\begin{definition}
Let the space of Monarch matrices be $\Set{M} \subseteq \real^{mn_1 \times mn_2}$. Then matrix $\Mat{M} \in \Set{M}$ if $\Mat{M} = \Mat{P}_1 \Mat{L} \Mat{P}_2^\top \Mat{R} = \Mat{P}_1 \diag(\Mat{L}_1, \cdots, \Mat{L}_{n_1}) \Mat{P}_2^\top \diag(\Mat{R}_1, \cdots, \Mat{R}_{n_2})$
where $\Mat{L}_{i} \in \real^{b_1 \times b_2}$, $\Mat{R}_i \in \real^{b_3 \times b_4}$ are dense rectangular matrices, and $n_1 b_2 = n_2 b_3$. $\Mat{P}_1$ is the permutation $\pi(i) = (i - b_1 \lfloor \frac{i}{b_1} \rfloor - 1 )n_1 + \lfloor \frac{i}{b_1} \rfloor + 1$ and $\Mat{P}_2$ is the permutation $\pi(j) = (j - b_2 \lfloor \frac{j}{b_2} \rfloor - 1 )n_1 + \lfloor \frac{j}{b_2} \rfloor + 1$.
\end{definition}
\vspace{-1mm}
It is clear that the block-diagonal matrix $\Mat{R}$ has the identical form to our width growing operator $\Mat{R}_{width}$. By applying the permutation matrices $\Mat{P}_1$ and $\Mat{P}_2$ to $\Mat{L}$, $\Mat{L}$ is transformed into exactly the same form with our depth-growth operator $\Mat{L}_{depth}$ in Eq. \ref{eqn:dw_factor}. This implies that our depth-width decomposition coincides with Monarch sparsification of dense matrices, which generalize  butterfly matrices \citep{dao2019learning} and enjoy rich expressivity properties \citep{dao2020kaleidoscope,dao2022monarch}.

\vspace{-2mm}
\section{Experiments}
\vspace{-2mm}

We conduct experiments to answer three key research questions. \textsc{Q1}: To what extent can \ours  improve the training efficiency (FLOPs and wall time) of transformers compared to training from scratch and other growth operators? \textsc{Q2}: Can \ours be universally effective across transformers from different domains (e.g., language and vision) and sizes? \textsc{Q3}: Can models trained using \ours achieve similar performance compared to the baselines when transferred to downstream tasks?

\vspace{-1mm}
\subsection{Experimental Setup}
\textbf{Datasets.} We follow~\citet{tan2020vokenization} and use the English Wikipedia corpus\footnote{While the original BERT~\citep{devlin2018bert} paper also uses the Toronto Book Corpus~\citep{zhu2015aligning}, we do not include it here since it is no longer publicly available.} for training BERT~\citep{devlin2018bert} and RoBERTa~\citep{liu2019roberta}. We use the public C4~\citep{raffel2020exploring} dataset for training GPT2~\citep{radford2019language}. We use ImageNet~\citep{deng2009imagenet} for training vision transformers. We use GLUE~\citep{wang2018glue}, SQuADv1.1~\citep{rajpurkar2016squad}, and SQuADv2.0~\citep{rajpurkar2018know} for evaluating pretrained BERT models.  We test downstream performance of vision transformers (DeiT~\citep{touvron2021training}) by performing transfer learning on $5$ downstream image classification tasks, including CIFAR10~\citep{krizhevsky2009learning}, CIFAR100~\citep{krizhevsky2009learning}, Flowers102~\citep{nilsback2008automated}, StanfordCars~\citep{krause20133d}, and ChestXRay8~\citep{wang2017chestx}. 

\textbf{Models.} We experiment with growing the following language andd vision transformers: (1) BERT-Small$\rightarrow$BERT-Base, BERT-Base$\rightarrow$BERT-Large, BERT-Small$\rightarrow$BERT-Large; (2) RoBERTa-Small$\rightarrow$RoBERTa-Base for RoBERTa; (3) GPT2-Base$\rightarrow$GPT2-Medium, (4) DeiT-S$\rightarrow$DeiT-B, and (5) CaiT-XS$\rightarrow$CaiT-S. {BERT-Small has $6$ layers with $512$ hidden dimensions, while other named models are their usual sizes.} See Appendix~\ref{sec:models} for full details.

\textbf{Baselines.} We compare our approach with the following baselines: (1) training from scratch baseline where we train the larger transformer without using any smaller pretrained models; (2) progressive training methods designed for growing depth in transformers (StackBERT~\citep{gong2019efficient} and MSLT~\citep{yang2020progressively}); (3) bert2BERT~\citep{chen2021bert2bert} that extends Net2Net~\citep{chen2015net2net} for width expansion and stacking for depth expansion; (4) KI~\citep{qin2021knowledge} which uses distillation for transferring knowledge from the smaller model to the larger model.

\textbf{Implementation details.} 
We always use 100 gradient steps to learn the \ours for all models, which is negligible in terms of FLOPs/wall time compared to full training after initialization. We train both BERT and RoBERTa models for $400$K steps with a warmup of $10$K steps.
We remove the next-sentence prediction task~\citep{liu2019roberta} and use a fixed sequence length of $128$ for pretraining both models. For BERT, we use a batch size of $256$ and a learning rate of $2e^{-4}$, while we use a batch size of $1024$ and a learning rate of $8e^{-4}$ for training RoBERTa models. 

Following~\citet{shen2022staged}, we train GPT2 models with a batch size of $384$ and sequence length of $1024$. For vision transformers, we build our models based on DeiT~\citep{touvron2021training} and CaiT~\citep{touvron2021going}, and apply their default hyper-parameters for training on ImageNet dataset. We train all our vision transformers for $300$ epochs with a batch size of $1024$. For transfer learning with BERT/RoBERTa, we follow \citet{tan2020vokenization} and train for $3$ epochs with a learning rate of $1e^{-4}$ and a batch-size of $32$ for all tasks in GLUE. On SQuAD v1.1 and SQuAD 2.0, we fine-tune for $2$ epochs with a learning rate of $5e^{-5}$ and a batch size of $12$. We run both GLUE and SQuAD evaluations three times with different random seeds and report the mean numbers. For transfer learning experiments on DeiT, we finetune the pretrained models with $1000$ epochs, batch size 768, learning rate $0.01$, and use the same data augmentation in training on ImageNet. We use the same pretraining data and experimental settings for all the baselines (including our approach) for a fair comparison. Note that we include the additional compute required for training \ours in all our tables and figures. However, since our \ours is only trained  for 100 steps, the influence on visualization and quantitative saving percentages is negligible. 

\vspace{-1mm}
\subsection{Results and Analysis}

\begin{figure*}
\vspace{-8mm}
     \centering
     \begin{subfigure}[b]{0.35\textwidth}
         \centering
         \includegraphics[width=\textwidth]{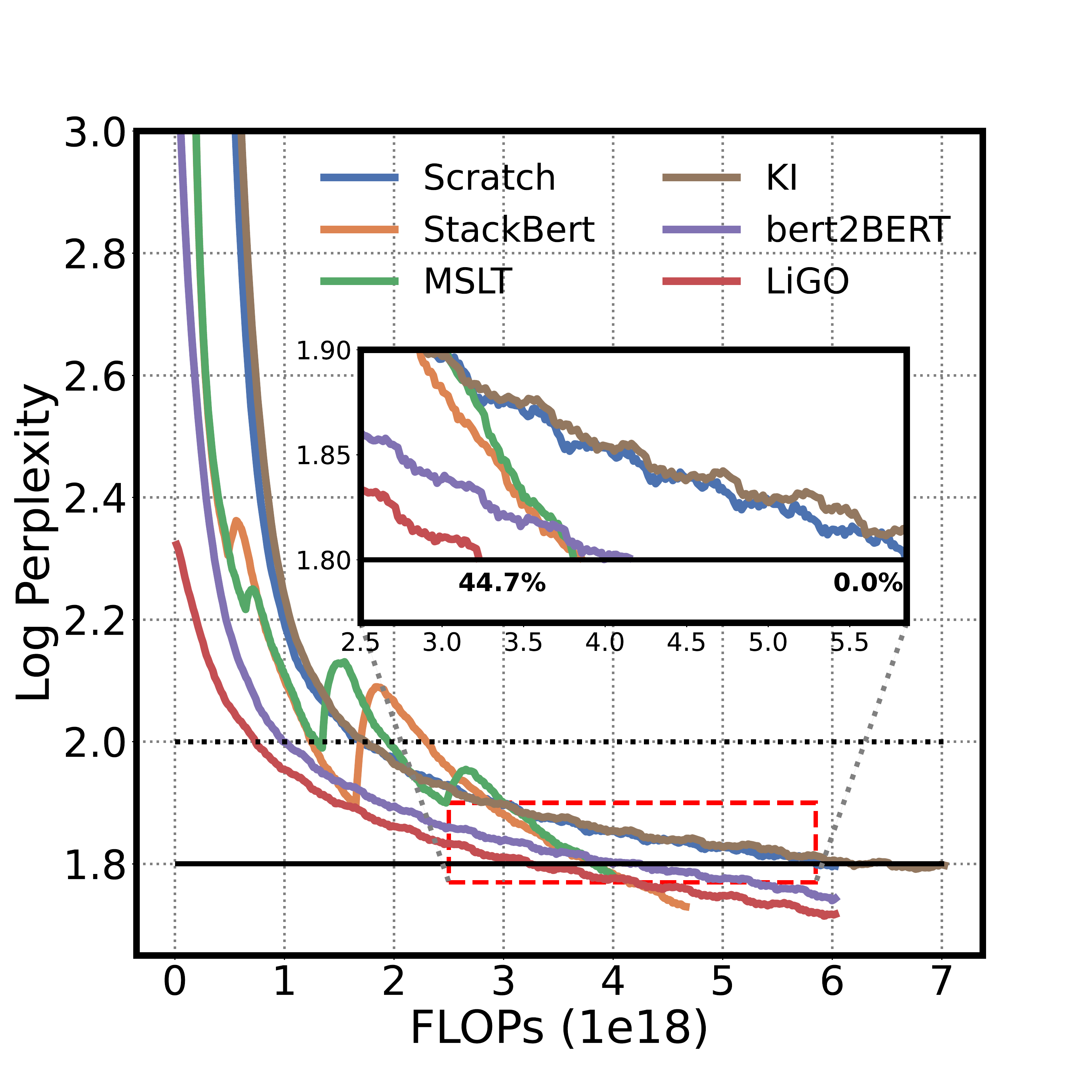} \vspace{-7mm}
         \caption{\scriptsize BERT-Small$\rightarrow$BERT-Base}
     \end{subfigure}
     \hspace{-6mm}
     \begin{subfigure}[b]{0.35\textwidth}
         \centering
         \includegraphics[width=\textwidth]{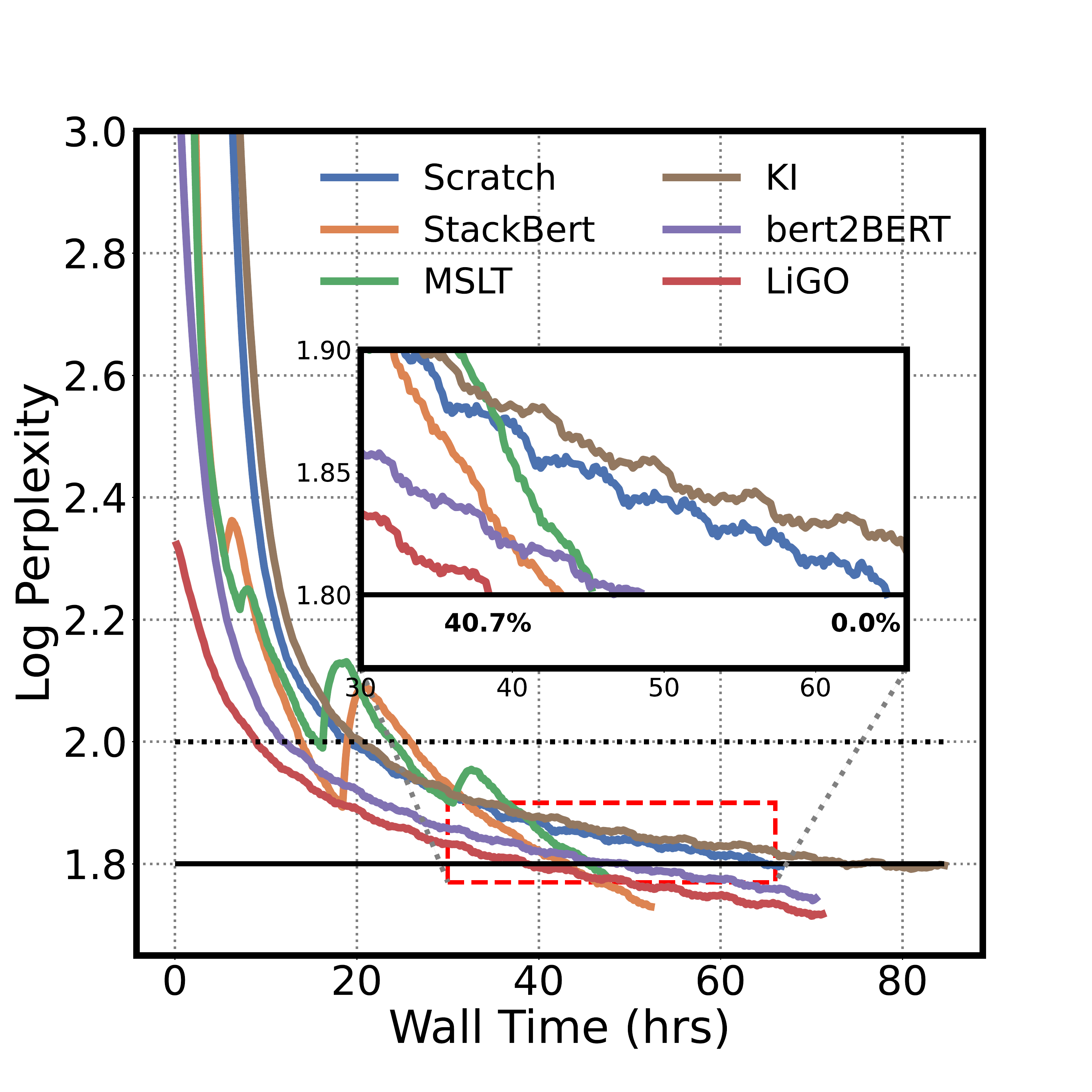} \vspace{-7mm}
         \caption{\scriptsize BERT-Small$\rightarrow$BERT-Base}
     \end{subfigure}
     \hspace{-6mm}
     \begin{subfigure}[b]{0.35\textwidth}
         \centering
         \includegraphics[width=\textwidth]{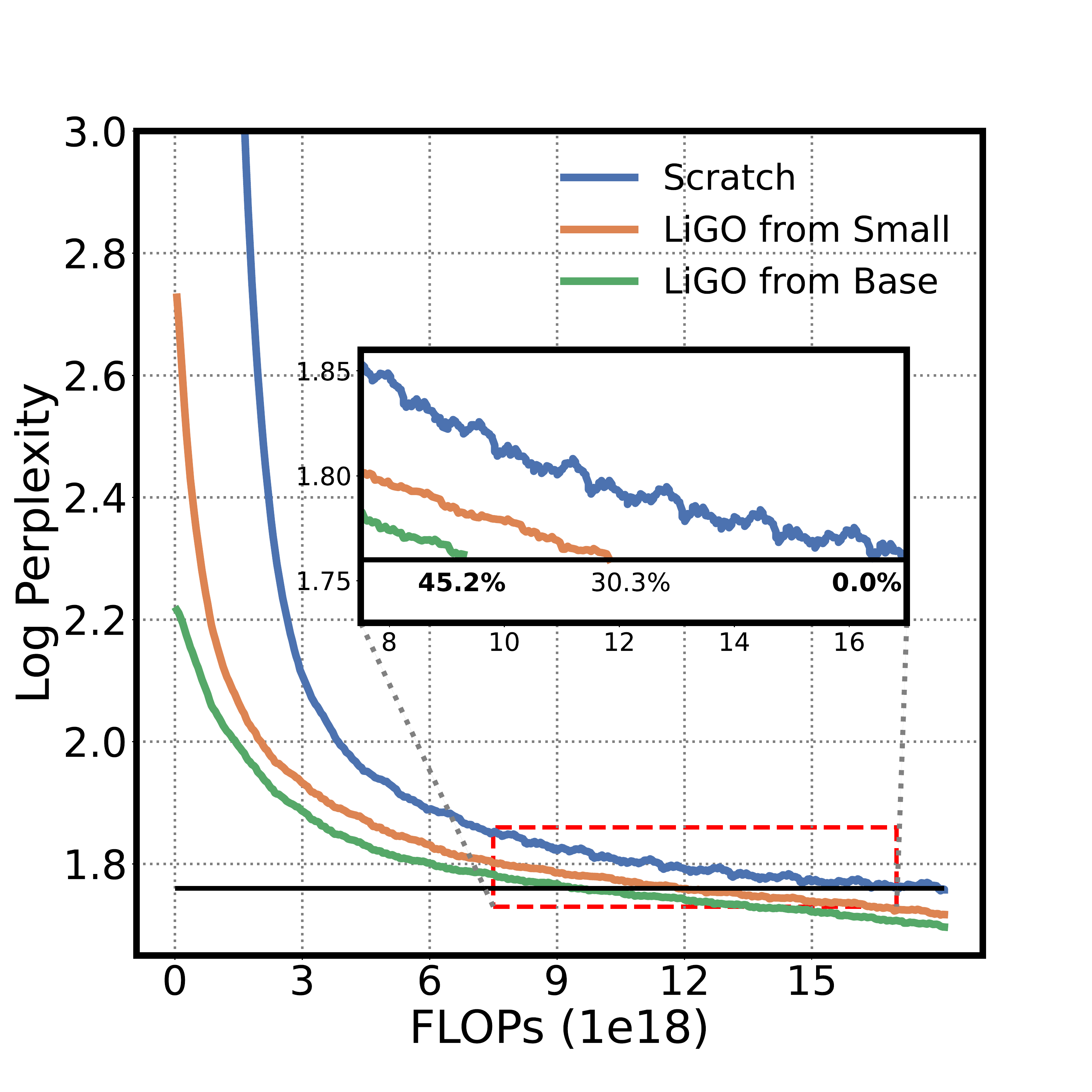} \vspace{-7mm}
         \caption{\scriptsize BERT-\{Small, Base\}$\rightarrow$BERT-Large}
     \end{subfigure} \vspace{-1mm}
        \caption{\small {Results on BERT.} (a-b) shows validation log perplexity vs. FLOPs and wall time respectively for training BERT-Base by reusing BERT-Small. (c) shows log perplexity vs. FLOPs in growing BERT-Small and BERT-Base to BERT-Large. The solid line indicates the final perplexity of the larger model trained from scratch, while the dotted line represents performance of the smaller model trained from scratch. \ours offers about 45\% savings in FLOPs and 40\% savings in wall time over BERT-Base training from scratch. Our approach is also flexible in reusing either BERT-Small or BERT-Base for accelerating BERT-Large training.} 
        \label{fig:bert} 
\end{figure*}

\begin{table}
\definecolor{Gray}{gray}{0.90}
\newcolumntype{a}{>{\columncolor{Gray}}c}
\caption{\small {Downstream transfer learning performance on GLUE and SQuAD.} All of the results are based on BERT-Base models trained using the different baselines. \ours achieves similar or even better performance than the original training from scratch baseline on several downstream tasks, despite improving training efficiency.} \vspace{-4mm}
\label{table:bert}
    \begin{center}
    \resizebox{\linewidth}{!}{
       \begin{tabular}{l|cc|ccccccc|cc|a|a}
             \Xhline{3\arrayrulewidth} 
              \multirow{2}{*}{\textbf{Method}} & \textbf{Savings} & \textbf{Savings} & \textbf{SST-2} & \textbf{MNLI} & \textbf{MRPC} & \textbf{CoLA} & \textbf{QNLI} & \textbf{QQP} & \textbf{STS-B} & \textbf{SQuADv1.1} & \textbf{SQuADv2.0} & \textbf{Avg.} & \textbf{Avg.} \\
    & \textbf{(FLOPs)} & \textbf{(Walltime)} & \textbf{(Acc.)} & \textbf{(Acc.)} & \textbf{(Acc.)} & \textbf{(Acc.)} & \textbf{(Acc.)} & \textbf{(Acc.)} & \textbf{(Acc.)} & \textbf{(F1/EM)} & \textbf{(F1/EM)} & \textbf{GLUE} & \textbf{SQuAD} \\
             \Xhline{2\arrayrulewidth}  
             Scratch & -- &  -- & 88.19 & 78.43 & 85.78 & 62.09 & 87.06 & 87.18 & 86.99 & 86.55 / 77.31 & 71.31 / 67.07 & 82.25 & 78.79 / 72.19           \\
            StackBERT & 34.1\% & 33.3\% & 88.99 & 79.72 & 85.29 & 59.09 & 87.28 & 89.17 & 86.97 & 86.50 / 77.42 & 71.32 / 67.41 & 82.36 & 78.91 / 72.41           \\
            MSLT & 34.9\% & 30.0\% & 88.53 & 78.10 & 82.60 & 64.76 & 83.58 & 88.54 & 85.89 & 86.07 / 76.73 & 70.68 / 67.17    & 81.72 & 78.47 / 71.95  \\
            KI & -5.7\%  & -13.9\% & 88.65 & 78.83 & 83.50 & 64.86 & 86.25 & 88.96 & 87.09 & 84.93 / 76.29 & 71.09 / 67.41 & 82.59 & 78.01 / 71.85 \\
            bert2BERT & 29.0\% & 25.1\% & 88.30 & 80.05 & 85.54 & 61.73 & 88.16 & 86.18 & 87.00 & 86.24 / 77.09 & 71.52 / 66.85 & 82.42 & 78.88 / 71.97 \\
            \Xhline{1\arrayrulewidth}
            \ours & 44.7\% & 40.7\% & 88.42 & 79.29 & 84.31 & 62.09 & 88.07 & 88.81 & 87.00 & 86.28 / 77.45 & 71.24 / 67.17 & 82.57 & 78.76 / 72.31 \\
            \Xhline{3\arrayrulewidth} 
        \end{tabular} 
        }
    \end{center} 
    \vspace{-5mm}
\end{table}

\textbf{BERT.} Figure~\ref{fig:bert} shows the comparison between the different baselines for training BERT models. As seen from Figure~\ref{fig:bert}(a), \ours saves $44.7\%$ computational cost (FLOPs) of training BERT-Base (12 layers, 768 dimensions) from scratch by reusing BERT-Small (6 layers, 512 dimensions). \ours offers $40.7\%$ savings in wall time compared to training from scratch (Figure~\ref{fig:bert}(b)). Among the compared methods, StackBERT is the most competitive in terms of both FLOPs and wall time, although \ours obtains $+10.6\%$ and $+7.2\%$ improvements in FLOPs and wall time on top of StackBERT. Similarly, \ours significantly outperforms the recent  bert2BERT method which saves about $30\%$ computational costs. We observe that KI does not provide any real savings in training as it requires additional computation for knowledge distillation. 
Figure~\ref{fig:bert}(c) shows that our \ours approach is flexible in growing either BERT-Small or BERT-Base for accelerating BERT-Large training. As expected, reusing BERT-Base instead of BERT-Small leads more savings in FLOPs ($45.2\%$ vs $30.3\%$) as BERT-Base contains more implicit knowledge in its parameters.  Table~\ref{table:bert} shows the per-task performance of different BERT-Base models on both GLUE and SQuAD benchmarks, where we find that BERT trained with  \ours achieves very similar performance compared to the baselines on both benchmarks. Finally, in Table~\ref{table:glue_lego_init} of the Appendix~\ref{sec:glue_lego}, we show that growing BERT-Small to BERT-Base with 100 steps of \ours and then finetuning on GLUE tasks \emph{without} additional pretraining outperforms just directly finetuning BERT-Small.

\begin{figure*}
\vspace{-8mm}
     \centering
     \begin{subfigure}[b]{0.35\textwidth}
         \centering
         \includegraphics[width=\textwidth]{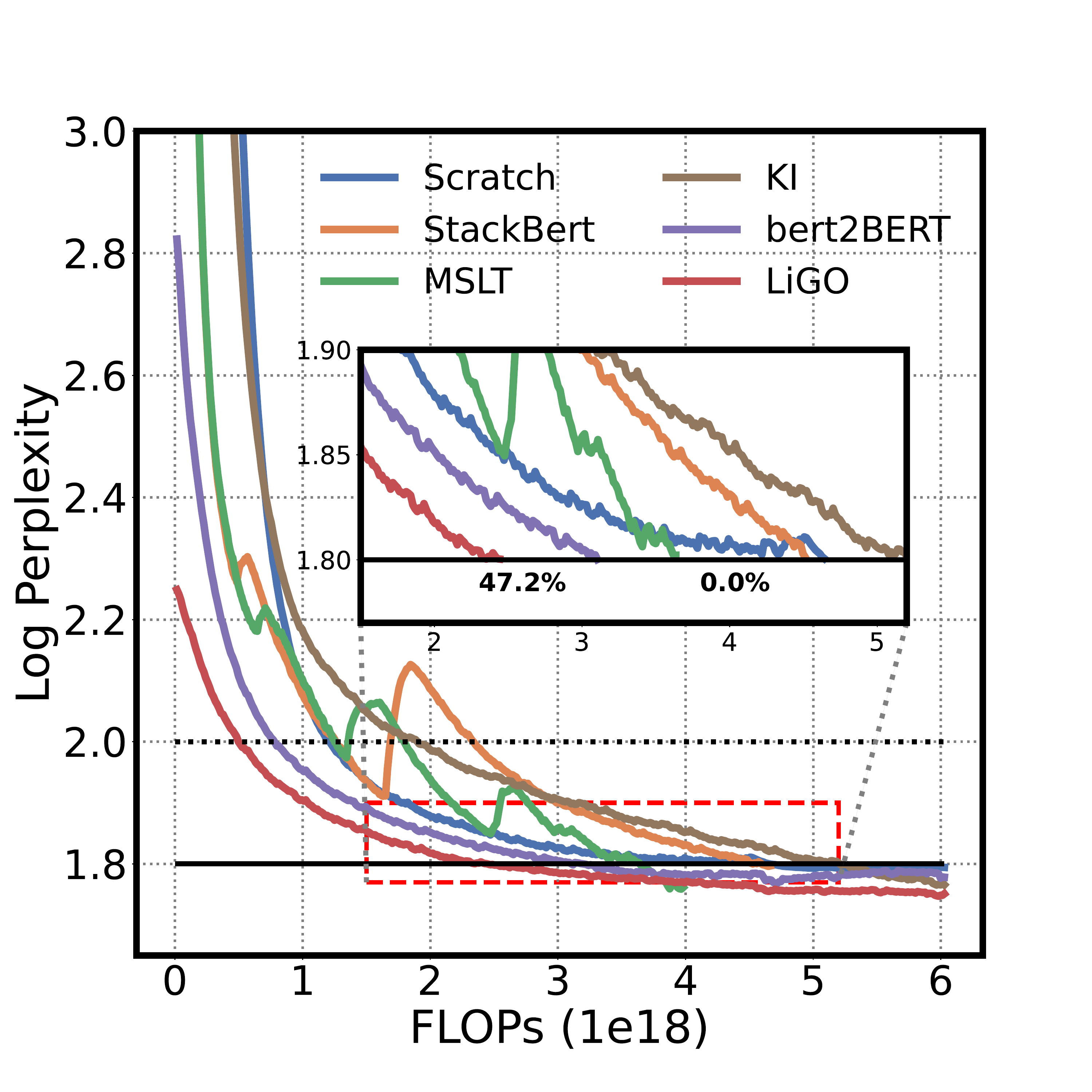} \vspace{-7mm}
         \caption{\scriptsize RoBERTa-Small$\rightarrow$RoBERTa-Base}
     \end{subfigure}
     \hspace{-6mm}
     \begin{subfigure}[b]{0.35\textwidth}
         \centering
         \includegraphics[width=\textwidth]{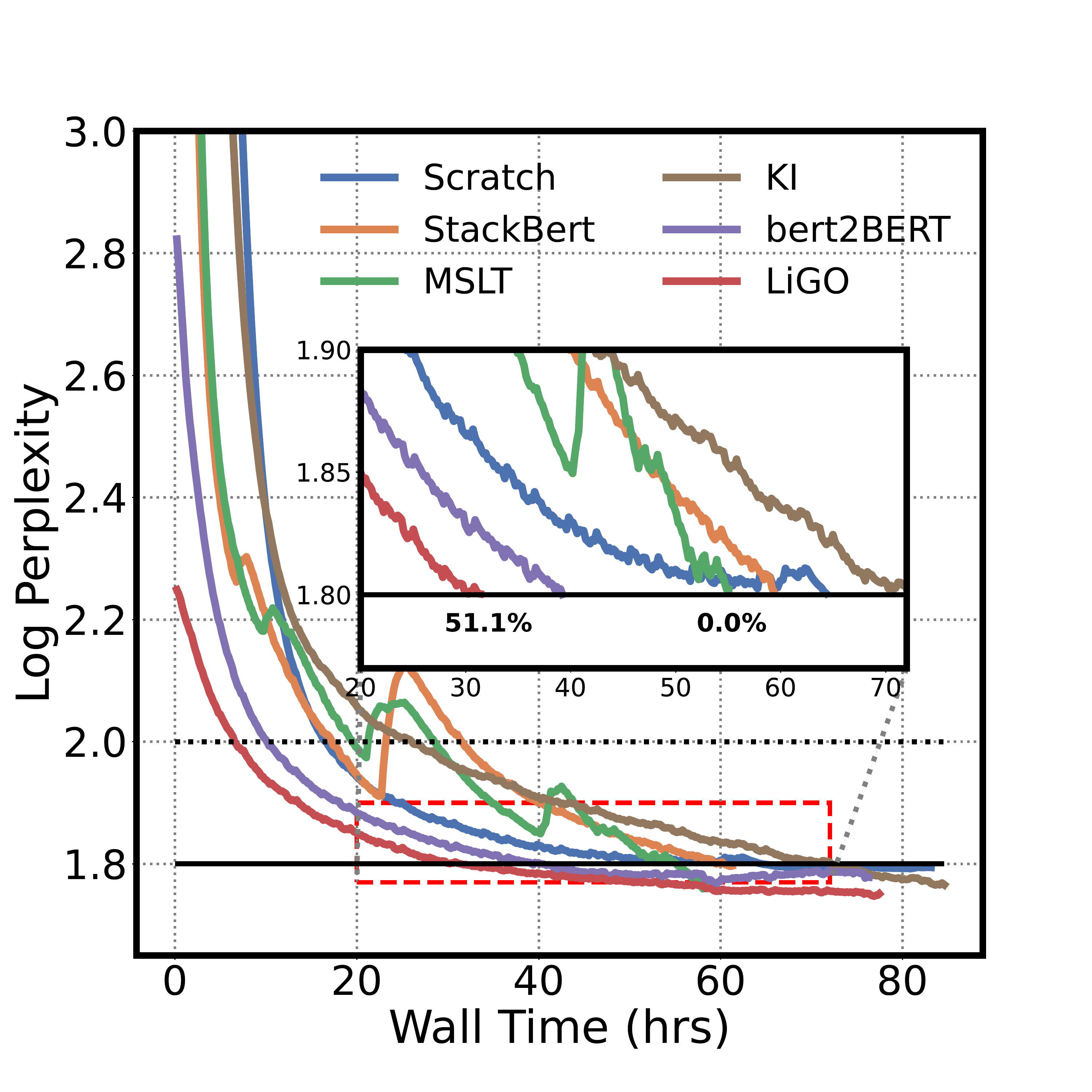} \vspace{-7mm}
         \caption{\scriptsize RoBERTa-Small$\rightarrow$RoBERTa-Base}
     \end{subfigure}
     \hspace{-6mm}
     \begin{subfigure}[b]{0.35\textwidth}
         \centering
         \includegraphics[width=\textwidth]{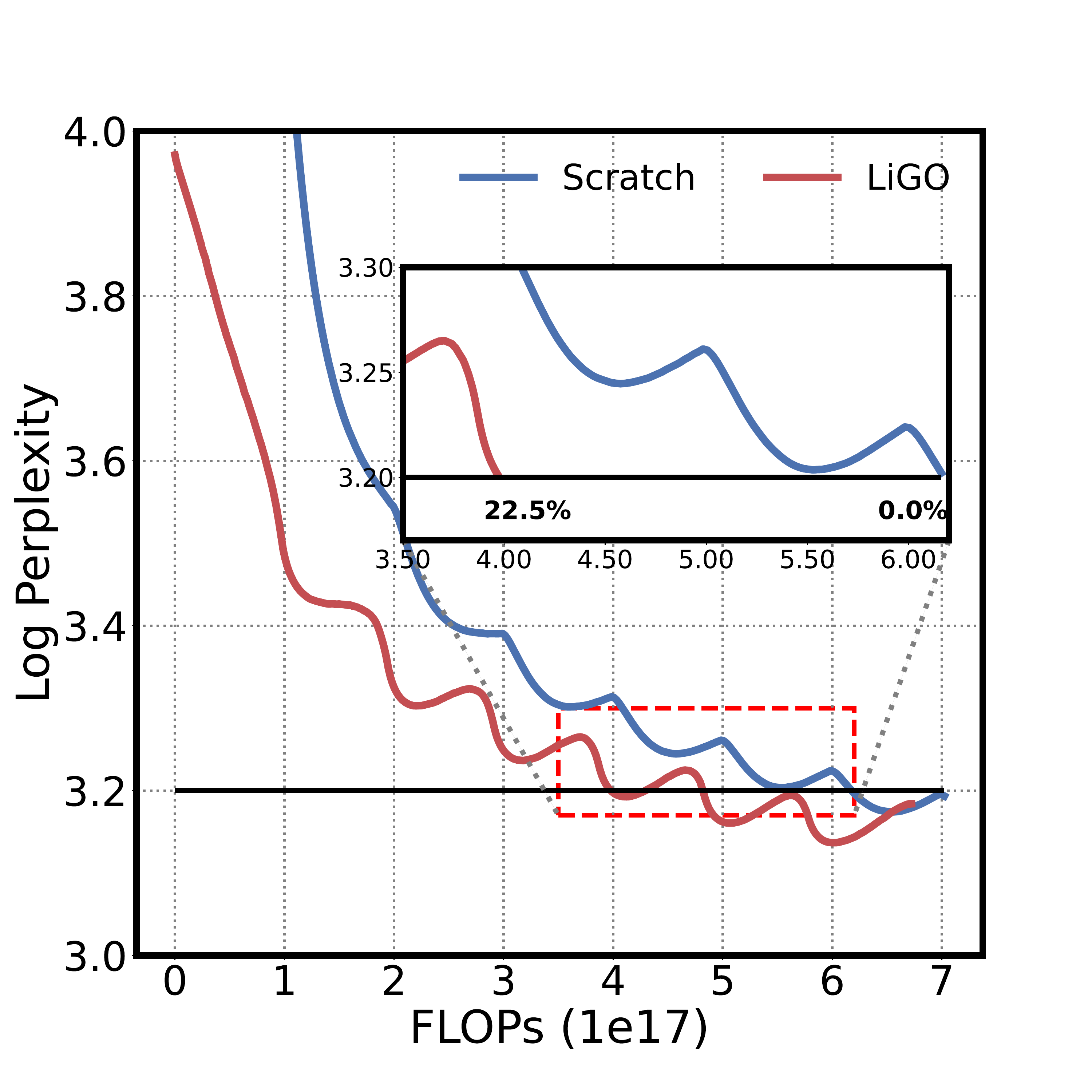} \vspace{-7mm}
         \caption{\scriptsize GPT2-Base$\rightarrow$GPT2-Medium}
     \end{subfigure} \vspace{-1mm}
        \caption{\small {Results on RoBERTa and GPT2.} \ours reduces FLOPs by $47.2\%$ and $22.5\%$ for RoBERTa-Base and GPT2-Medium, , demonstrating its effectiveness across different training strategies and architectures.}
        \label{fig:roberta_gpt} \vspace{-6mm}
\end{figure*}

\textbf{RoBERTa and GPT2.} Figure~\ref{fig:roberta_gpt}(a-b) shows the results on RoBERTa, whose training recipe uses larger batch size and learning rate than BERT. \ours similarly accelerates RoBERTa training, which indicates that our method is robust to optimization hyperparameters.
On GPT2, \ours saves $22.5\%$ computation cost of training GPT2-Medium (345M parameters) by reusing GPT2-Base (117M parameters) (Figure~\ref{fig:roberta_gpt}(c)).
These consistent improvements show that \ours is effective for accelerating transformer training across different model architectures and sizes.  

\begin{wrapfigure}{R}{0.68\textwidth}
     \centering \vspace{-6mm}
     \begin{subfigure}[b]{0.35\textwidth} 
         \centering
         \includegraphics[width=\textwidth]{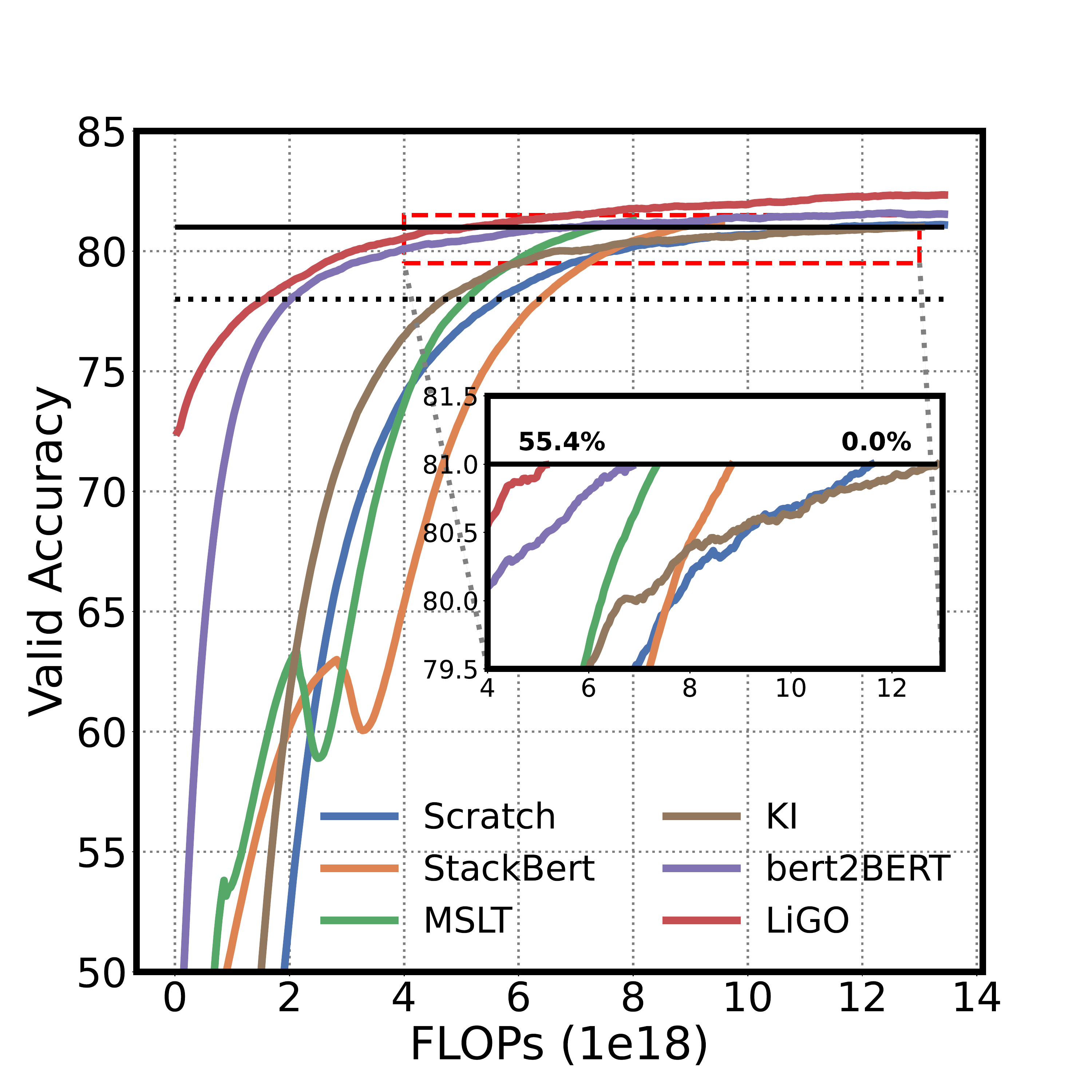} \vspace{-7mm}
         \caption{\scriptsize DeiT-S$\rightarrow$DeiT-B}
     \end{subfigure}
     \hspace{-6mm}
     \begin{subfigure}[b]{0.35\textwidth}
         \centering
         \includegraphics[width=\textwidth]{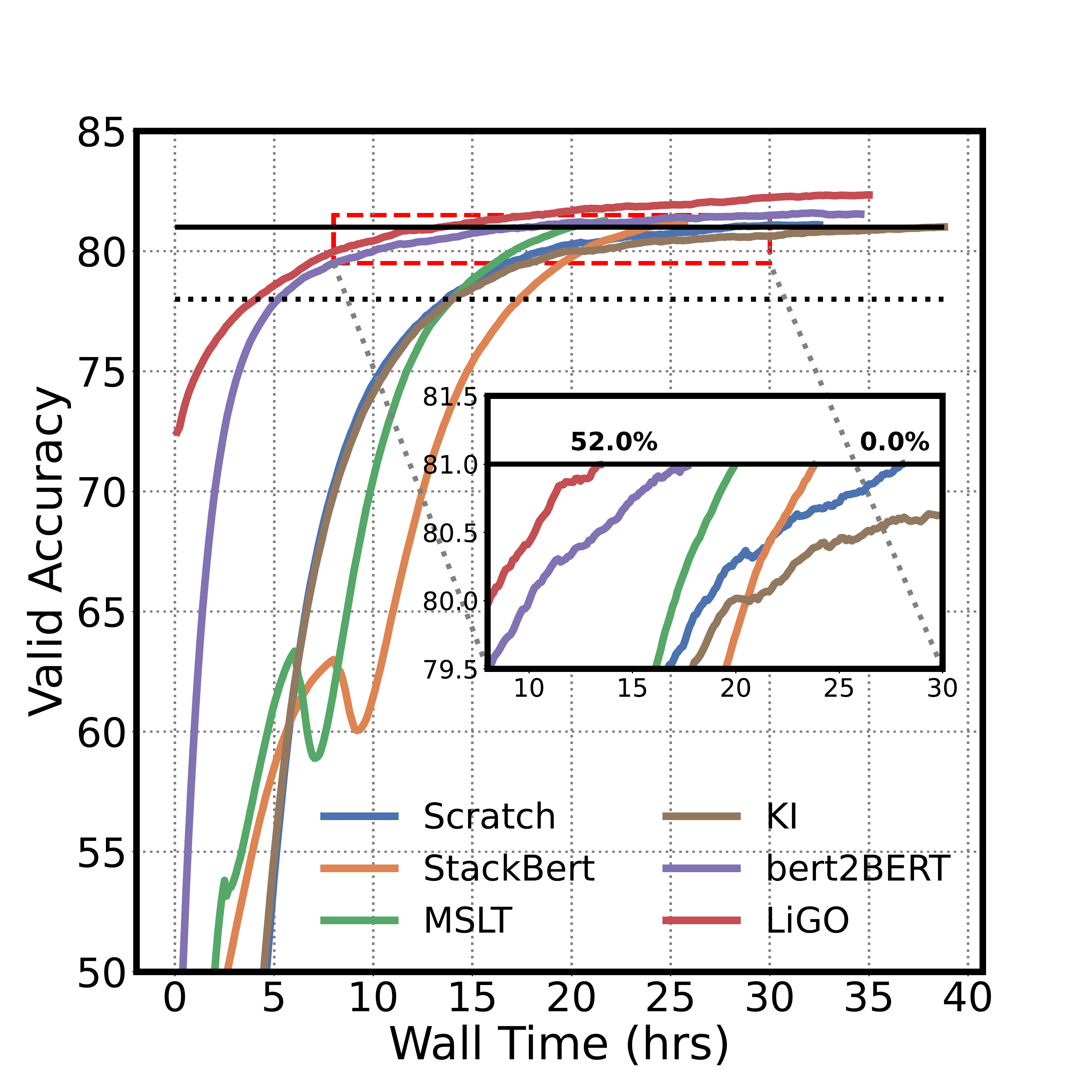} \vspace{-7mm}
         \caption{\scriptsize DeiT-S$\rightarrow$DeiT-B}
     \end{subfigure} \vspace{-2mm}
    \caption{\small Results on DeiT. (a) Accuracy vs. flops and (b) accuracy vs. wall time for training DeiT-B. \ours saves flops and wall time by more than 50\% over training from scratch on ImageNet.}
    \label{fig:deit} \vspace{-2mm}
\end{wrapfigure}

\textbf{Vision Transformers.} 
Figure~\ref{fig:deit} shows that by growing from DeiT-S, \ours can save $55.4$\% FLOPs and $52$\% GPU wall time to reach the same performance of 81\% on ImageNet.
Interestingly, the model initialized by our data-driven growth operator (w/ only 100 gradient steps of tuning) can already achieve $72$\% accuracy at the beginning of training and leads to the final accuracy of $81.7\%$ at the end of the training.
Compared to the next best method, bert2BERT, \ours obtains more than $15\%$ savings, which once again demonstrates the effectiveness of our approach in growing vision transformers as well.
Table~\ref{table:deit} shows that finetuning results on downstream tasks perform on-par  with the model trained from scratch, showing that \ours does not harm the model's generalization capabilities when transferred to downstream datasets.
We also find similar savings in CaiT-XS$\rightarrow$CaiT-S where \ours saves FLOPs by $52.6\%$ and wall time by $46.1\%$ over training CaiT-S from scratch on ImageNet (see Appendix~\ref{sec:cait} for more details).

\begin{wraptable} {r}{0.55\linewidth}
    \begin{center} \vspace{-5mm}
    
            \caption{\small {Transfer learning performance of DeiT-B.} DeiT-B model trained using \ours performs similarly to the original train-from-scratch baseline on all downstream tasks.} \vspace{-2mm}
    \resizebox{\linewidth}{!}{
       \begin{tabular}{l|cc|c|ccccc}
             \Xhline{3\arrayrulewidth} 
              \textbf{Method} & \textbf{FLOPs} & \textbf{Walltime} & \textbf{ImageNet} & \textbf{CIFAR10} & \textbf{CIFAR100} & \textbf{Flowers} & \textbf{Cars} & \textbf{ChestXRay8}  \\
             \Xhline{2\arrayrulewidth}  
             Scratch & -- &  -- & $81.10$ & $99.09$ &  $90.76$ & $97.79$ &  $92.06$ & 55.81 \\
            StackBERT & $23.8\%$ &  $15.1\%$ & $81.21$ & $99.11$ &  $90.80$ & $97.56$ &  $92.09$ & 55.77 \\
            MSLT & $36.7\%$ &  $28.9\%$ & $81.27$ & $99.07$ &  $90.21$ & $97.71$ &  $92.11$ & 55.79\\
            KI & $-11.2\%$ &  $-36.8\%$ & $81.01$ & $98.94$ &  $90.32$ & $97.81$ &  $92.08$ & 55.80\\
            bert2BERT & $40.8\%$ &  $37.0\%$ & $81.59$  & $99.14$ &  $90.69$ & $97.67$ &  $92.15$ & 55.82 \\
            \Xhline{1\arrayrulewidth}
            \ours  & $55.4\%$ &  $52.0\%$ & $81.71$ & $99.12$ &  $90.74$ & $97.77$ &  $92.09$ & 55.82 \\
            \Xhline{3\arrayrulewidth} 
        \end{tabular}
        }
    \label{table:deit}
    \end{center} \vspace{-4mm}
\end{wraptable}

\textbf{Combining with other training strategies.}
We also find that  \ours  can be effectively combined with  orthogonal strategies such as layer dropping~\citep{zhang2020accelerating}, token dropping~\citep{hou2022token}, and staged training~\citep{chen2021bert2bert}. More details are included in Appendix~\ref{sec:ortho}. Figure~\ref{fig:orthogonal} shows that \ours can be combined with other training techniques to improve the computational savings by $4.7\%$, $7.4\%$, and $8.2\%$ with layer dropping, token dropping and staged training, respectively. Following~\citep{chen2021bert2bert}, we also apply staged training strategy to bert2BERT and observe that \ours still outperforms bert2BERT with staged training by $16.7\%$ (see Figure~\ref{fig:orthogonal}(c)).

\begin{figure*}
\vspace{-8mm}
     \centering
     \begin{subfigure}[b]{0.35\textwidth}
         \centering
         \includegraphics[width=\textwidth]{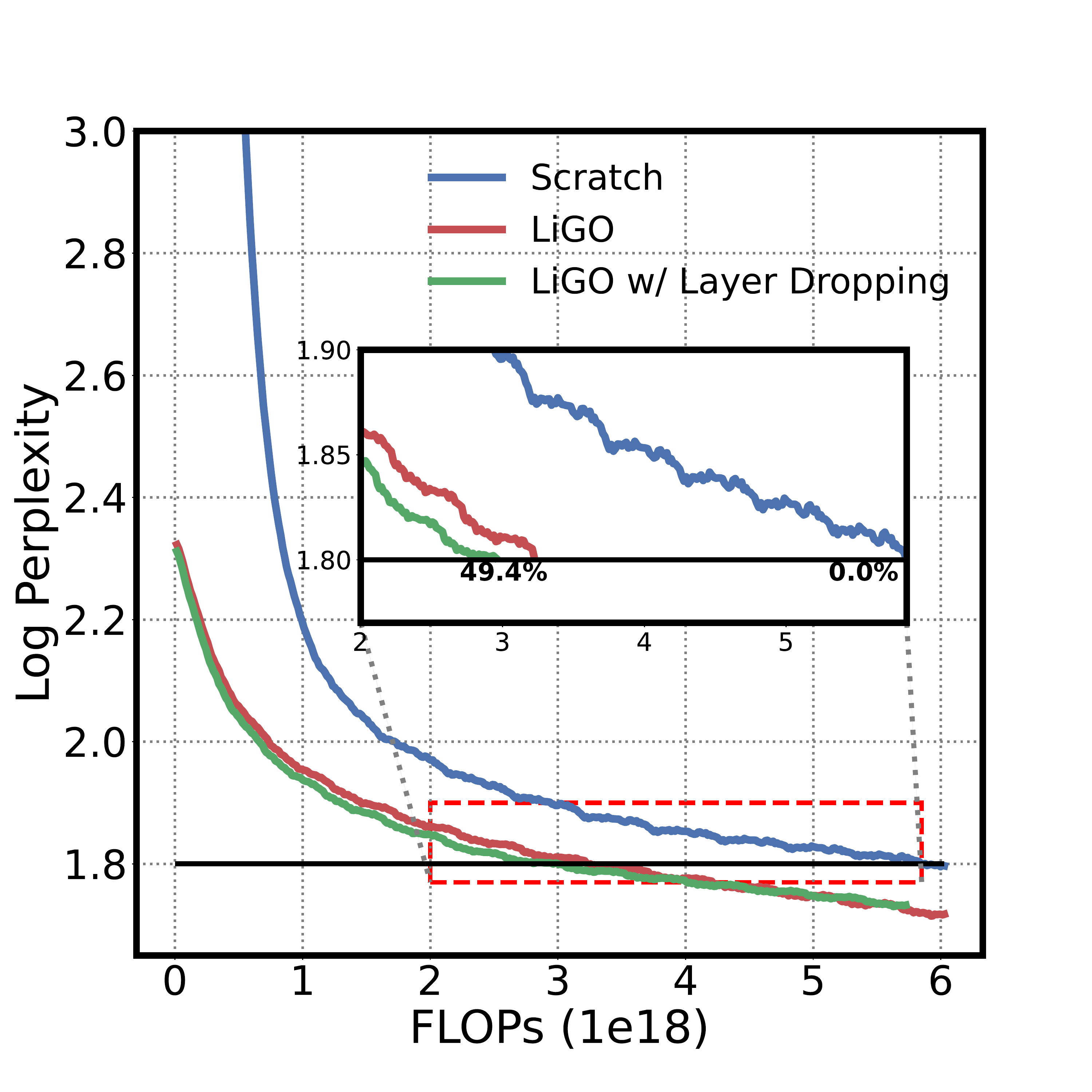} \vspace{-7mm}
         \caption{\scriptsize \ours w/ Layer Dropping}
     \end{subfigure}
     \hspace{-6mm}
     \begin{subfigure}[b]{0.35\textwidth}
         \centering
         \includegraphics[width=\textwidth]{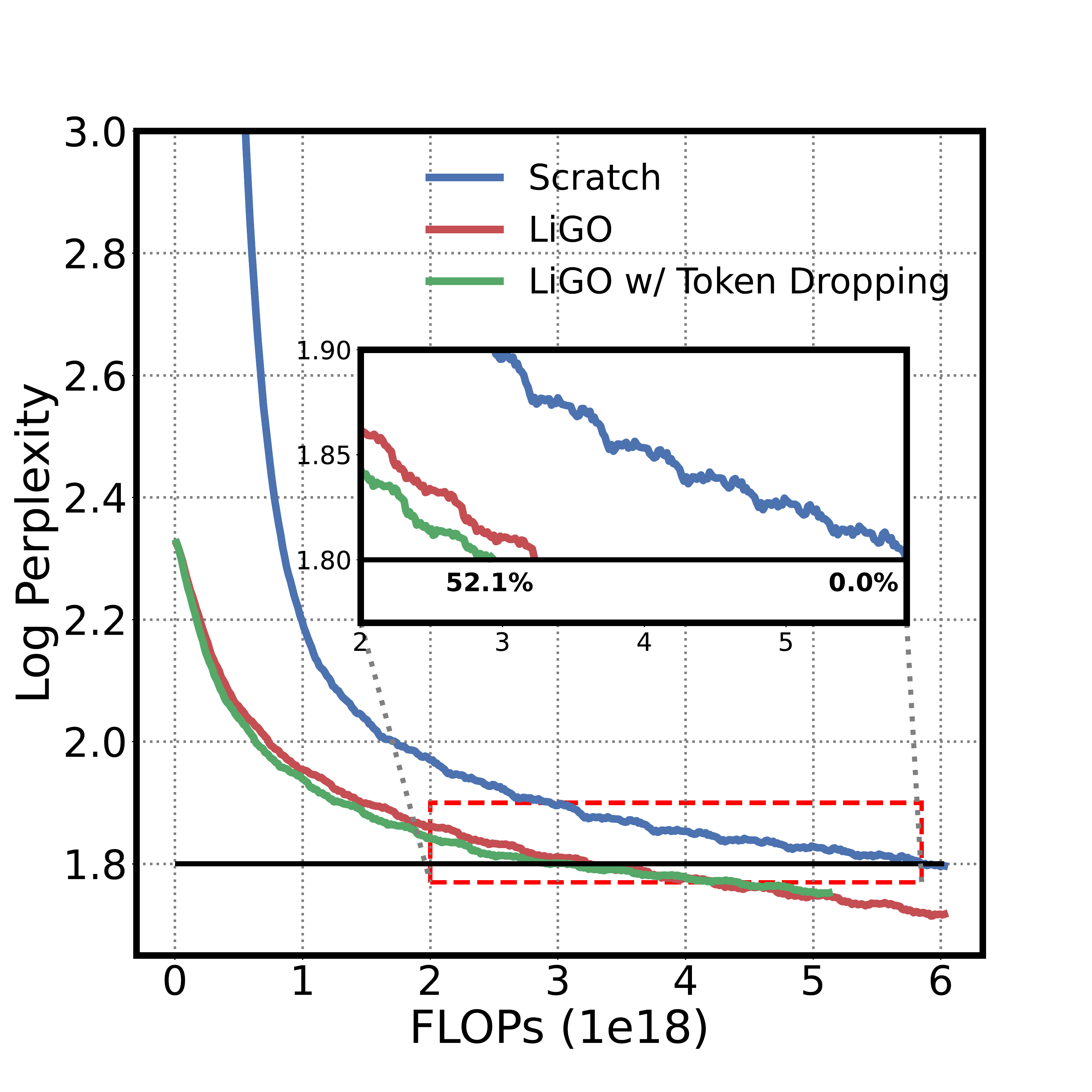} \vspace{-7mm}
         \caption{\scriptsize \ours w/ Token Dropping}
     \end{subfigure}
     \hspace{-6mm}
     \begin{subfigure}[b]{0.35\textwidth}
         \centering
         \includegraphics[width=\textwidth]{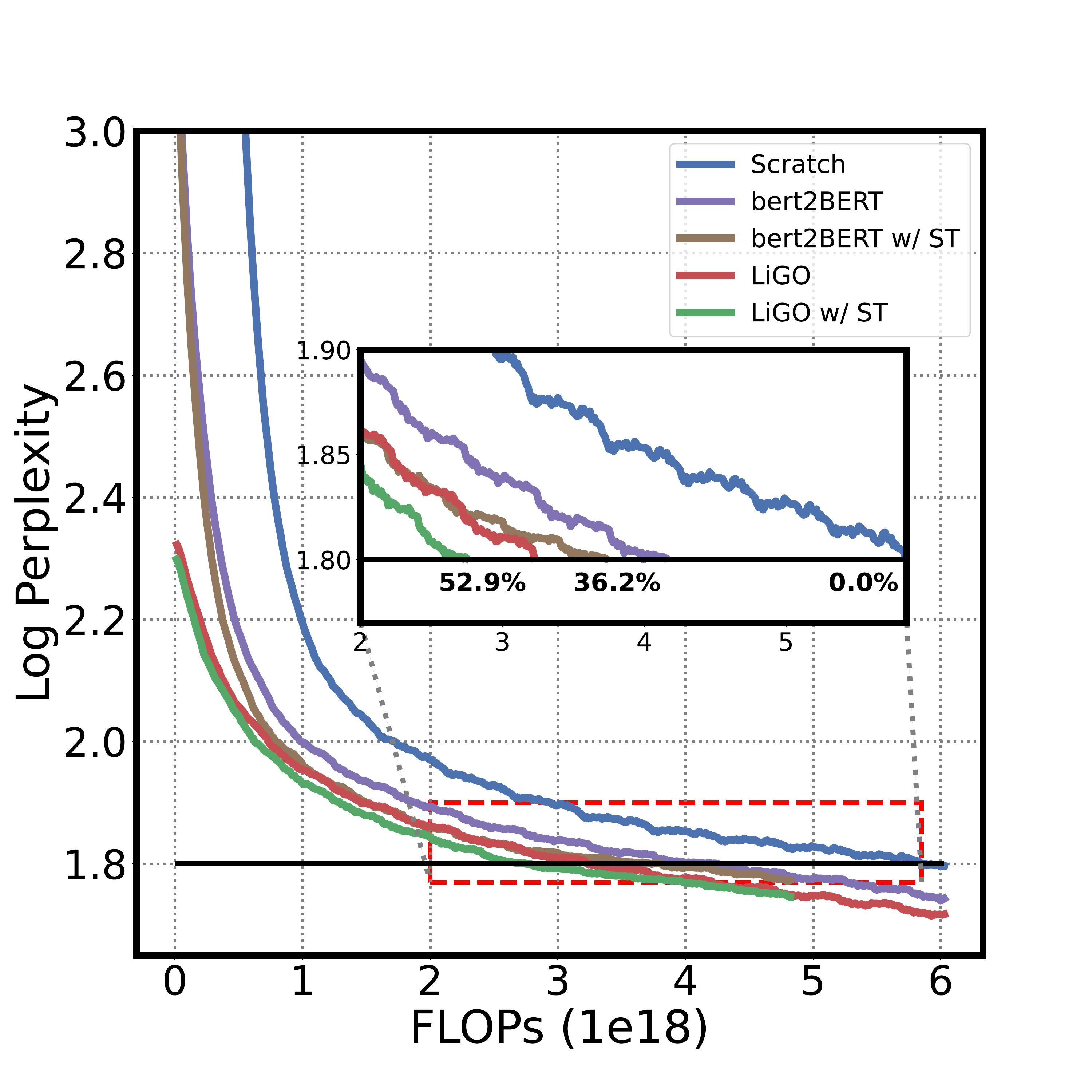} \vspace{-7mm}
         \caption{\scriptsize \ours w/ Staged Training}
     \end{subfigure} \vspace{-1mm}
        \caption{\small {\ours with other efficient training strategies.} Our approach can be  combined with (a) layer dropping, (b) token dropping, and (c) staged training (ST), for further accelerate BERT training.}
        \label{fig:orthogonal} \vspace{-4mm}
\end{figure*}

\vspace{-2mm}
\subsection{Ablation Studies}

\begin{wrapfigure}{R}{0.52\textwidth}
	\centering \vspace{-12mm}
 \begin{subfigure}[b]{0.26\textwidth}
         \centering
         \includegraphics[width=\textwidth]{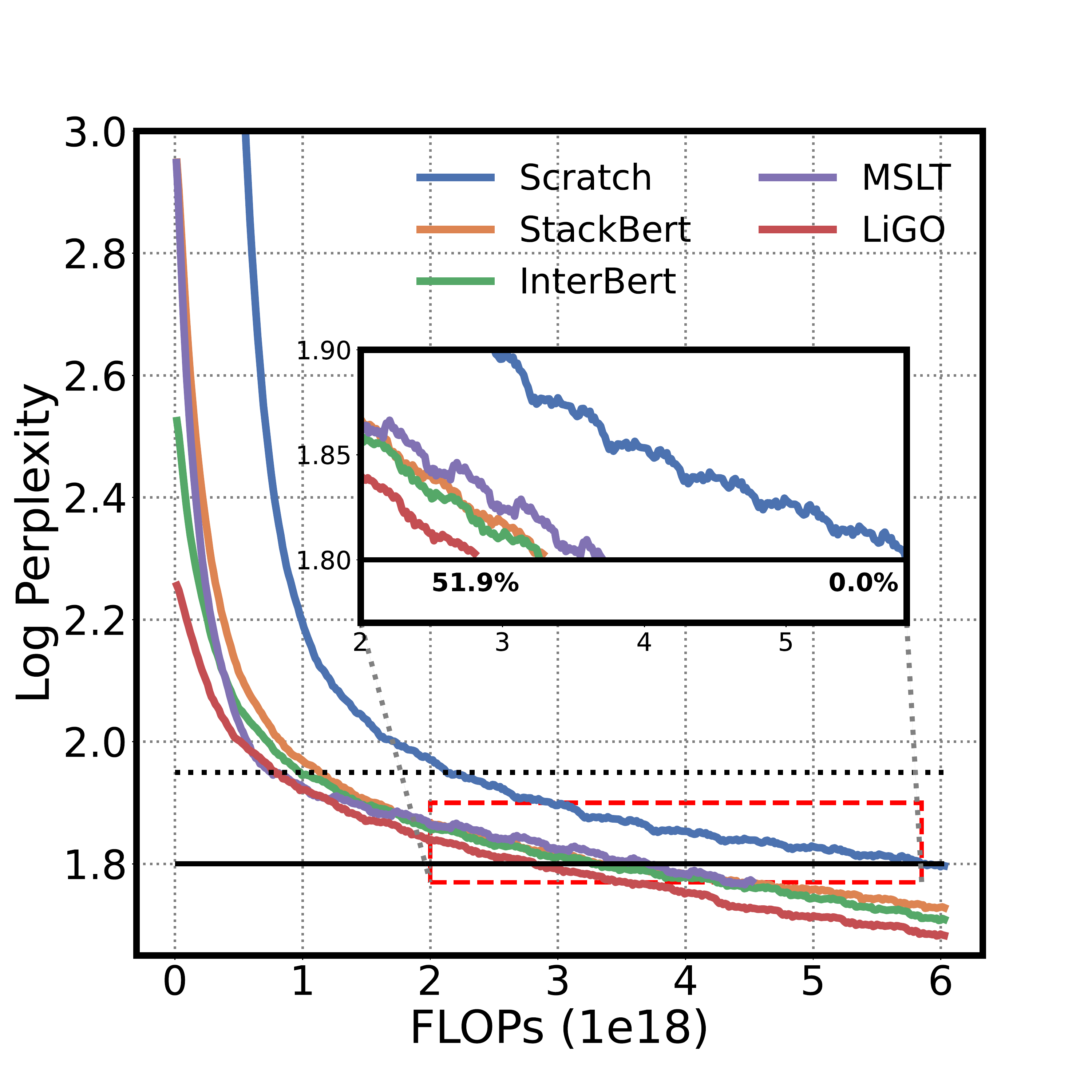} \vspace{-7mm}
         \caption{\tiny BERT(6, 768)$\rightarrow$BERT(12, 768)}
     \end{subfigure}
     \hspace{-4mm}
     \begin{subfigure}[b]{0.26\textwidth}
         \centering
         \includegraphics[width=\textwidth]{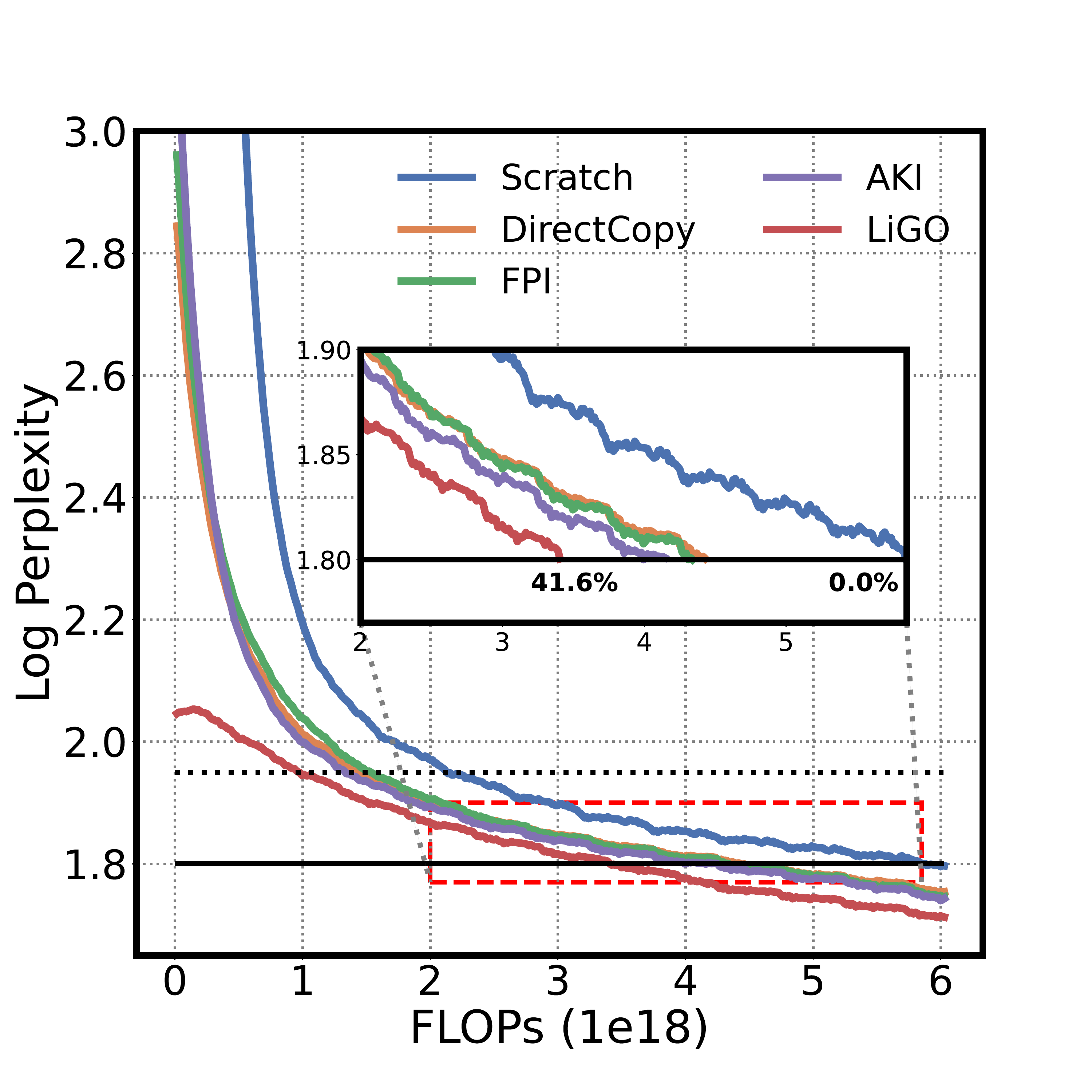} \vspace{-7mm}
         \caption{\tiny BERT(12, 512)$\rightarrow$BERT(12, 768)}
     \end{subfigure} \vspace{-1mm}
	\caption{\small Results on Depth-only and Width-only growth. \ours saves  $51.7\%$ FLOPS when expanding depth-only, and  $41.6\%$ FLOPS when expanding width-only.}
	\label{fig:bert_depth_only} \vspace{-2mm}
\end{wrapfigure}

\textbf{Depth-only expansion.}
We examine the effectiveness of our proposed depth expansion operator ($\Mat{L}_{depth})$ by only growing the depth of BERT from $6$ layers to $12$ layers, i.e, (BERT(6, 768)$\rightarrow$BERT(12, 768).
We compare with stacking~\citep[StackBERT,][]{gong2019efficient}, Interpolation~\citep[InterBERT,][]{chang2017multi,dong2020towards} (see Eq.~\ref{eqn:depth_stack}), and MSLT~\citep{yang2020progressively}.
For \ours, we only apply its $\Mat{L}_{depth}$ component to the pre-trained model weights. Results in Figure~\ref{fig:bert_depth_only}(a) show that a data-driven approach works well even when just growing across the depth dimension.

\begin{wraptable} {r}{0.3\linewidth}
\centering \vspace{-4mm}
    \caption{\small Effect of number of gradient steps. 
    ``+FLOPs" stands for additional flops (in $10^{15}$).}
    \vspace{-2mm}
    \label{table:step}
    \resizebox{\linewidth}{!}{
       \begin{tabular}{l|cc}
             \Xhline{3\arrayrulewidth} 
              \textbf{\# of Steps} & \textbf{+FLOPs} & \textbf{Savings}  \\
             \Xhline{2\arrayrulewidth}  
             100 & 3.61 &  44.7\% \\
             500 & 18.06 &  44.5\% \\
             1000 & 36.13 &  44.2\% \\
             10000 & 361.30 &  38.9\% \\
            \Xhline{3\arrayrulewidth} 
        \end{tabular}
        } \vspace{-3mm}
\end{wraptable}

\textbf{Width-only expansion.}
We also verify the effectiveness of $\Mat{R}_{width}$ by
only extending BERT width from $512$ to $768$, i.e., BERT(12, 512)$\rightarrow$BERT(12, 768).  
We compare \ours based initialization with direct copy \citep{wei2016network}, function preserving initialization  \citep[FPI,][]{chen2015net2net}, and advanced knowledge initialization \citep[AKI,][]{chen2021bert2bert}. LiGO's width expansion component outperforms all other methods, as shown in Figure~\ref{fig:bert_depth_only}(b).

\textbf{Number of growing steps.} Our main experiments just use $100$ gradient steps to grow. We tune our \ours  on the pretraining set for $100$, $500$, $1000$, and $10000$ steps and compute the additional FLOPs for BERT-Small$\rightarrow$BERT-Base training. Table~\ref{table:step} shows that training \ours  within $1000$ steps results in the identical model convergence (reaching $1.8$ PPL at $215$K steps). This suggests tuning model weights under the linear constraints of \ours can achieve faster convergence. Training \ours  for more than $10000$ steps can provide a model with slightly faster convergence ($214$K steps), but results in less saving overall.

\vspace{-1mm}
\section{Conclusion}
\vspace{-1mm}

This paper describes an approach for accelerating transformer training by learning to grow pretrained transformers, where the larger transformer's parameters are initialized as a linear mapping from the smaller pretrained model's parameters, The linear map is factorized  to be a composition of sparse width- and depth-expansion operators with a Kronecker factorization that groups parameters into layers and neurons. We demonstrate the effectiveness of our proposed approach on both language and vision transformers of different sizes, outperforming several competing methods. While our compute resources prevented us from applying \ours to even larger transformers, it would be interesting to see if this can be applied on top of even larger models.

\subsubsection*{Acknowledgments}
PW sincerely thanks Zhen Wang for insightful discussion and providing reference repositories for language model pre-training. PW also appreciates Hao Tan's assistance for reproducing fine-tuning results on GLUE datasets. YK and LTH  were partially supported an MIT-IBM Watson AI grant and an Amazon award. We also acknowledge support from the IBM  Research AI Hardware Center, and the Center for Computational Innovation at Rensselaer Polytechnic Institute for the computational resources on the AiMOS Supercomputer.

\bibliographystyle{iclr2023_conference}
\bibliography{ICLR2023/Main/references}

\appendix

\section{Universality of LiGO Operator} \label{sec:universality}

\begin{proposition} \label{prop:special_case_kron}
StackBERT (Eq. \ref{eqn:depth_stack}), Interpolation (Eq. \ref{eqn:depth_stack}), and Net2Net (Eq. \ref{eqn:width_net2net}) are all the special cases of the LiGO operator (Eq. \ref{eqn:genral_form}).
\end{proposition}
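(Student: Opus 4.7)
The plan is to exhibit, for each of the three named operators, an explicit choice of the depth factor $\Mat{w} \in \real^{L_2 \times L_1}$ and the width factors $\{\Mat{A}_l, \Mat{B}_l \in \real^{D_2 \times D_1}\}_{l=1}^{L_1}$ such that the resulting LiGO operator of Eq.~\ref{eqn:genral_form} produces the same new weights. The two depth operators (StackBERT and Interpolation) have $D_1 = D_2$ and the width operator (Net2Net) has $L_1 = L_2$, so each case exercises only one of the two Kronecker factors, with the other set to identity.

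For StackBERT and Interpolation, since $D_1 = D_2$, take $\Mat{A}_l = \Mat{B}_l = \Mat{I}_{D_1}$ for all $l$. Then each block $\Mat{A}_l \otimes \Mat{B}_l$ is the identity on $\real^{D_1^2}$, so $\Mat{R}_{width}\V(\Mat{\Theta}) = \V(\Mat{\Theta})$ and the width-expanded weights equal the original $\Mat{W}_l$. Applying $\Mat{L}_{depth} = \Mat{w}\otimes\Mat{I}$ then produces, as the $l$-th new layer, the linear combination $\sum_{j=1}^{L_1} w_{l,j}\Mat{W}_j$. To recover StackBERT from Eq.~\ref{eqn:depth_stack}, set $w_{l, l\bmod L_1} = 1$ and all other entries zero; to recover Interpolation, set $w_{l, \lfloor l/k\rfloor} = 1$ and all other entries zero. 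In both cases $\Mat{M}\V(\Mat{\Theta})$ reads off exactly the prescribed duplicated/interleaved stacking of pretrained layers.

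For Net2Net, since $L_1 = L_2$, take $\Mat{w} = \Mat{I}_{L_1}$ so that $\Mat{L}_{depth}$ is the identity and each layer is transformed independently by $\Mat{R}_l = \Mat{A}_l\otimes\Mat{B}_l$. Invoking the vectorization identity $(\Mat{A}\otimes\Mat{B})\V(\Mat{W}) = \V(\Mat{B}\Mat{W}\Mat{A}^\top)$ already used in Eq.~\ref{eqn:kron_R}, it suffices to choose $\Mat{A}_l, \Mat{B}_l$ so that $\Mat{B}_l \Mat{W}_l \Mat{A}_l^\top$ coincides with the right-hand side of Eq.~\ref{eqn:width_net2net}. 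Setting $\Mat{A}_l^\top = [\,\Mat{I}\ \ \Mat{S}_l\,]$ (equivalently $\Mat{A}_l = [\,\Mat{I}\ \ \Mat{S}_l\,]^\top$) and $\Mat{B}_l = [\,\Mat{I}\ \ \Mat{S}_{l-1}\,]^\top \Mat{D}_l^{-1}$, both in $\real^{D_2\times D_1}$, recovers Net2Net by direct substitution.

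The proof is therefore a matter of bookkeeping rather than genuine difficulty; the one point to be careful about is that the vectorization convention used to derive the Kronecker identity $\V(\Mat{B}\Mat{W}\Mat{A}^\top) = (\Mat{A}\otimes\Mat{B})\V(\Mat{W})$ agrees with the one used to obtain the block structure of Eq.~\ref{eqn:dw_factor}, so that applying $\Mat{w}\otimes\Mat{I}$ to the width-expanded vector truly outputs, in its $l$-th block, the linear combination $\sum_j w_{l,j}\V(\Mat{W}^{\text{wex}}_j)$. Since the paper already notes that its derivation is agnostic to the choice of vectorization order, the conventions can be aligned consistently and the remaining verifications reduce to plugging each choice of $(\Mat{w}, \{\Mat{A}_l\}, \{\Mat{B}_l\})$ into Eq.~\ref{eqn:genral_form} and comparing with Eqs.~\ref{eqn:depth_stack} and~\ref{eqn:width_net2net}.
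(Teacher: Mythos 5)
Your proposal is correct and follows essentially the same route as the paper's own proof: exhibit explicit choices of $\Mat{w}$, $\{\Mat{A}_l\}$, $\{\Mat{B}_l\}$ (with the unused factor set to identity) and verify by substitution into Eq.~\ref{eqn:genral_form}. The only differences are cosmetic --- you write the depth selections compactly as a $0/1$ matrix $\Mat{w}$ where the paper writes out the block matrix of identities, and for Net2Net your placement of $\Mat{D}_l^{-1}$ on the out-dimension factor $\Mat{B}_l$ actually matches Eq.~\ref{eqn:width_net2net} more literally than the paper's appendix, which attaches the normalized selection to $\Mat{A}_l$ instead.
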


\begin{proof}
We prove Proposition \ref{prop:special_case_kron} by constructing parameters in $\Mat{L}_{depth}$ and $\Mat{R}_{width}$.
\paragraph{Stacking.} Stacking-based methods~\citep{gong2019efficient,yang2020progressively} duplicate the entire lower blocks on top of the small model to the form new layers (Eq. \ref{eqn:depth_stack}). Formally, we show this operation can be done by the following operator:
\begin{align}
\Mat{M} = \underbrace{\begin{bmatrix}
\Mat{I} & & \\
& \Mat{I} & \\
& & \ddots \\
\Mat{I} & & \\
& \Mat{I} & \\
& & \ddots \\
\end{bmatrix}}_{\Mat{L}_{depth}}
\underbrace{\begin{bmatrix}
\Mat{I} & & \\
& \ddots & \\
& & \Mat{I}
\end{bmatrix}}_{\Mat{R}_{width}}
\end{align}

\paragraph{Interpolation.} Interpolation based methods~\citep{chang2017multi,dong2020towards} interleave each layer for twice. We can construct the following matrix to achieve layer interpolation  (Eq. \ref{eqn:depth_stack}).
\begin{align}
\Mat{M} = \underbrace{\begin{bmatrix}
\Mat{I} & & \\
\Mat{I}& & \\
& \Mat{I} & \\
& \Mat{I} & \\
& & \ddots \\
& & \ddots \\
\end{bmatrix}}_{\Mat{L}_{depth}}
\underbrace{\begin{bmatrix}
\Mat{I} & & \\
& \ddots & \\
& & \Mat{I}
\end{bmatrix}}_{\Mat{R}_{width}}
\end{align}

We remark that any rearrangement of layers to construct new layers (mathematically a permutation of existing layers with replacement) can be constructed in a similar way.

\paragraph{Net2Net.} Since we show in Eq. \ref{eqn:kron_R}, the Kronecker factorization on $\Mat{R}_l$ amounts to decomposing the general growth operator into in-dimension and out-dimension expansion. We can construct Net2Net~\citep{chen2015net2net} based growth by simply letting:
\begin{align}
& \Mat{L}_{depth} = \Mat{I} \in \real^{L_1 D_2 \times L_2 D_2}, \quad
\Mat{R}_{width} = \begin{bmatrix}
\Mat{A}_1 \otimes \Mat{B}_1 & & \\
& \ddots & \\
&  & \Mat{A}_{L_1} \otimes \Mat{B}_{L_1}
\end{bmatrix} \\
& \Mat{A}_l = \begin{bmatrix} \Mat{I} \\ \widetilde{\Mat{S}}_{l-1} \end{bmatrix}, \quad
\Mat{B}_l = \begin{bmatrix} \Mat{I} \\ \Mat{S}_{l}, \end{bmatrix}
\end{align}
where $\Mat{S}_l \in \{0, 1\}^{(D_2 - D1) \times D_1}$ is a selection matrix to enlarge the out dimension, and $\widetilde{\Mat{S}}_{l-1} = \Mat{S}_{l-1} \diag(\Mat{1}^\top \Mat{S}_{l-1})^{-1}$ copies the selection from $\Mat{S}_{l-1}$ with normalization to guarantee functionality preserving in expansion.
\end{proof}

\section{Implementation Details}

\subsection{Growing Transformers with LiGO} \label{sec:detailed_trans}

The transformer architecture consists of an embedding layer, multi-block attention layer, and an output layer.
The core ingredient attention block consists of a Multi-Head Attention (MHA) module followed by a FeedForward Network (FFN), with a skip connection across the both blocks.
Applying LiGO requires the following considerations: 

\paragraph{Embedding layer.} For both language and vision transformers, the embedding layer can be regarded as a linear layer, whose inputs are one-hot embeddings in language models. We draw a learnable matrix $\Mat{B}^{(emb)}$ to extend its output dimension.

\paragraph{Multi-head attention blocks.} An attention layer in transformer consists of multi-head attention weights ($\Mat{W}^Q, \Mat{W}^K, \Mat{W}^V$) and a linear projection ($\Mat{W}^O$).
Let $\Mat{A}^{k}_l$ and $\Mat{B}^{k}_l$ with $k \in \{Q, K, V, O\}$ be the in- and out-dimension expansion matrices (Eq. \ref{eqn:kron_R}) for query, key, value, and projection in the $l$-th layer, respectively.
Applying $\Mat{B}^{k}_l$ to $\Mat{W}^k$ ($k \in {Q, K, V}$) constructs new heads by a weighted summation of rows of all existing heads.
To make sure the new input and output channels are aligned across modules, we tie our \ours operator with the following scheme: (1) $\Mat{A}^{k}_l = (\Mat{B}^{(emb)})^\top$ for $\forall k \in \{Q, K ,V\}$, (2) $\Mat{A}^{O}_l = (\Mat{B}^{(V)}_l)^\top$, (3) $\Mat{B}^{O}_l = \Mat{B}^{(emb)}$ for $\forall l \in [L_1]$.
Both the bias  and  layer normalization inherit the associated linear transformations' out-dimension expansion matrices to grow the width.
For depth expansion, each module independently combines the same module from other layers (Eq. \ref{eqn:genral_form}) with learnable coefficients $\Mat{w}$.

\paragraph{Feed-forward networks.}
Each attention block is followed by a two-layer FFN.
Let $\Mat{A}^{k}_l$ and $\Mat{B}^{k}_l$ with $k \in \{fc1, fc2\}$ be the in- and out-dimension expansion matrices (Eq. \ref{eqn:kron_R}) for the first and second FFN layer in the $l$-th layer, respectively.
We tie the parameters for feed-forward networks: $\Mat{A}^{(fc1)}_l = \Mat{B}^{(emb)\top}$, $\Mat{A}^{(fc2)}_l = \Mat{B}^{(fc1)\top}_l$ and $\Mat{B}^{(fc2)}_l = \Mat{B}^{(emb)}$.

\paragraph{Output layer.}
For output head, we have $\Mat{A}^{(out)} = \Mat{B}^{(emb)\top}$, since the output dimension of attention layers are always aligned with  $\Mat{B}^{(emb)}$ by our construction.
The output layer does not need out-dimension expansion. 
Algorithm~\ref{alg:lego_transformer} summarizes \ours for growing transformers.

\begin{algorithm}[t]
\caption{A forward pass of \ours with transformer.}\label{alg:lego_transformer}
\begin{algorithmic}[1]
\State {\textbf{Input}: A small transformer with hidden $D_1$ and number of layer $L_1$. Denote the embedding layer as $\Mat{W}^{(emb)} \in \real^{D_1 \times E}$, attention layers as $\Mat{W}^{Q}_l,\Mat{W}^{K},\Mat{W}^{V}_l,\Mat{W}^{O}_l \in \real^{D_1, \times D_1}$, FFN layers as $\Mat{W}^{(fc1)}_l \in \real^{4D_1 \times D_1}$, $\Mat{W}^{(fc2)}_l \in \real^{D_1 \times 4D_1}$, LayerNorm layers as $\Mat{W}^{(ln1)}_l \in \real^{D_1 \times 4D_1}$, $\Mat{W}^{(ln1)}_l, \Mat{W}^{(ln2)}_l \in \real^{D_1}$, $\forall l \in [L_1]$, the output head $\Mat{W}^{(out)} \in \real^{C
\times D_1}$ }
\State {\textbf{Output}: A large transformer with hidden $D_2$ and number of layer $L_2$. Denote the weight matrices as $\Mat{\Omega}$ with the corresponding superscripts as the small model. }
\State $\Mat{\Omega}^{(emb)} \gets \Mat{B}^{(emb)} \Mat{W}^{(emb)}$
\For{$l = 1, \cdots, L_1$} \Comment{Width Expansion}
\State $\Mat{\Omega}^{Q}_l \gets \Mat{B}^{(Q)} \Mat{W}^{Q}_l \Mat{B}^{(emb)\top}$
\State $\Mat{\Omega}^{K}_l \gets \Mat{B}^{(K)} \Mat{W}^{K}_l \Mat{B}^{(emb)\top}$
\State $\Mat{\Omega}^{V}_l \gets \Mat{B}^{(V)} \Mat{W}^{V}_l \Mat{B}^{(emb)\top}$
\State $\Mat{\Omega}^{O}_l \gets \Mat{B}^{(emb)} \Mat{W}^{V}_l \Mat{B}^{(V)\top}$
\State $\Mat{\Omega}^{(ln1)}_l \gets \Mat{B}^{(emb)} \Mat{W}^{(ln1)}_l$
\State $\Mat{\Omega}^{(fc1)}_l \gets \Mat{B}^{(fc1)} \Mat{W}^{V}_l \Mat{B}^{(emb)\top}$
\State $\Mat{\Omega}^{(fc2)}_l \gets \Mat{B}^{(emb)} \Mat{W}^{V}_l \Mat{B}^{(fc1)\top}$
\State $\Mat{\Omega}^{(ln2)}_l \gets \Mat{B}^{(emb)} \Mat{W}^{(ln2)}_l$
\EndFor
\For{$l = 1, \cdots, L_2$} \Comment{Depth Expansion}
\State $\Mat{\Omega}^{Q}_l \gets \sum_{j=1}^{L_1} w_{l,j}^{Q} \Mat{\Omega}^{Q}_j$
\State $\Mat{\Omega}^{K}_l \gets \sum_{j=1}^{L_1} w_{l,j}^{K} \Mat{\Omega}^{K}_j$
\State $\Mat{\Omega}^{V}_l \gets \sum_{j=1}^{L_1} w_{l,j}^{V} \Mat{\Omega}^{V}_j$
\State $\Mat{\Omega}^{O}_l \gets \sum_{j=1}^{L_1} w_{l,j}^{O} \Mat{\Omega}^{O}_j$
\State $\Mat{\Omega}^{(ln1)}_l \gets \sum_{j=1}^{L_1} w_{l,j}^{(ln1)} \Mat{\Omega}^{(ln1)}_j$
\State $\Mat{\Omega}^{(fc1)}_l \gets \sum_{j=1}^{L_1} w_{l,j}^{(fc1)} \Mat{\Omega}^{(fc1)}_j$
\State $\Mat{\Omega}^{(fc2)}_l \gets \sum_{j=1}^{L_1} w_{l,j}^{(fc2)} \Mat{\Omega}^{(fc2)}_j$
\State $\Mat{\Omega}^{(ln2)}_l \gets \sum_{j=1}^{L_1} w_{l,j}^{(ln2)} \Mat{\Omega}^{(ln2)}_j$
\EndFor
\State $\Mat{\Omega}^{(out)} \gets \Mat{W}^{(out)} \Mat{B}^{(emb)\top}$
\State Train transformer with parameters $\Mat{\Omega}$.
\end{algorithmic}
\end{algorithm}

\subsection{Model Configurations} \label{sec:models}

We summarize the settings of different transformer models used for our experiments in Table~\ref{table:bert_models}.
For BERT and RoBERTa, we re-use the code base provided by \citet{tan2020vokenization}.
For GTP2, we follow the model configuration of OpenAI and use the pre-training code provided by \citet{shen2022staged}.
For DeiT, we use their official codebase \citep{touvron2021training}.

\begin{table}[h!]
\centering
\caption{\small Configuration of different transformers.}
\label{table:bert_models}
\resizebox{1.\linewidth}{!}{
\begin{tabular}{l|cccccc|l|cc}
\Xhline{3\arrayrulewidth} 
& BERT-Small & BERT-Base & RoBERTa-Small & RoBERTa-Base & GPT2-Base & GPT2-Medium & & DeiT-S & DeiT-B \\
\hline
\# layers & 6 &  12 & 6 & 12 & 12 & 24 & \# layers & 12 &  12  \\
\# hidden & 512 & 768  & 512 & 768  & 768 & 1024 & \# hidden & 384 & 768  \\
\# heads & 8 & 12  & 8 & 12  & 12 & 16 & \# heads & 6 & 12 \\
\# vocab & 30522 &  30522 & 50265 & 50265 & 50257 & 50257 & input res. & 224 &  224 \\
seq. length & 128 &  128 & 128 & 128 & 1024 & 1024 & patch size & 16 & 16 \\
\Xhline{3\arrayrulewidth} 
\end{tabular}}
\end{table}

\subsection{Orthogonal Efficient Training Strategies} \label{sec:ortho}

For layer dropping, we follow the same progressive dropping rate schedule with \citet{zhang2020accelerating}, and set the maximum dropping rate to 0.1 to recover the performance. For token dropping, we randomly set 15\% tokens aside in the middle layers. In the first $50$k steps of staged training, only a sub-network is activated and trained, and afterwards, we perform full-model training for 350k steps. 

\begin{figure}
     \centering
     \begin{subfigure}[b]{0.5\textwidth}
         \centering
         \includegraphics[width=\textwidth]{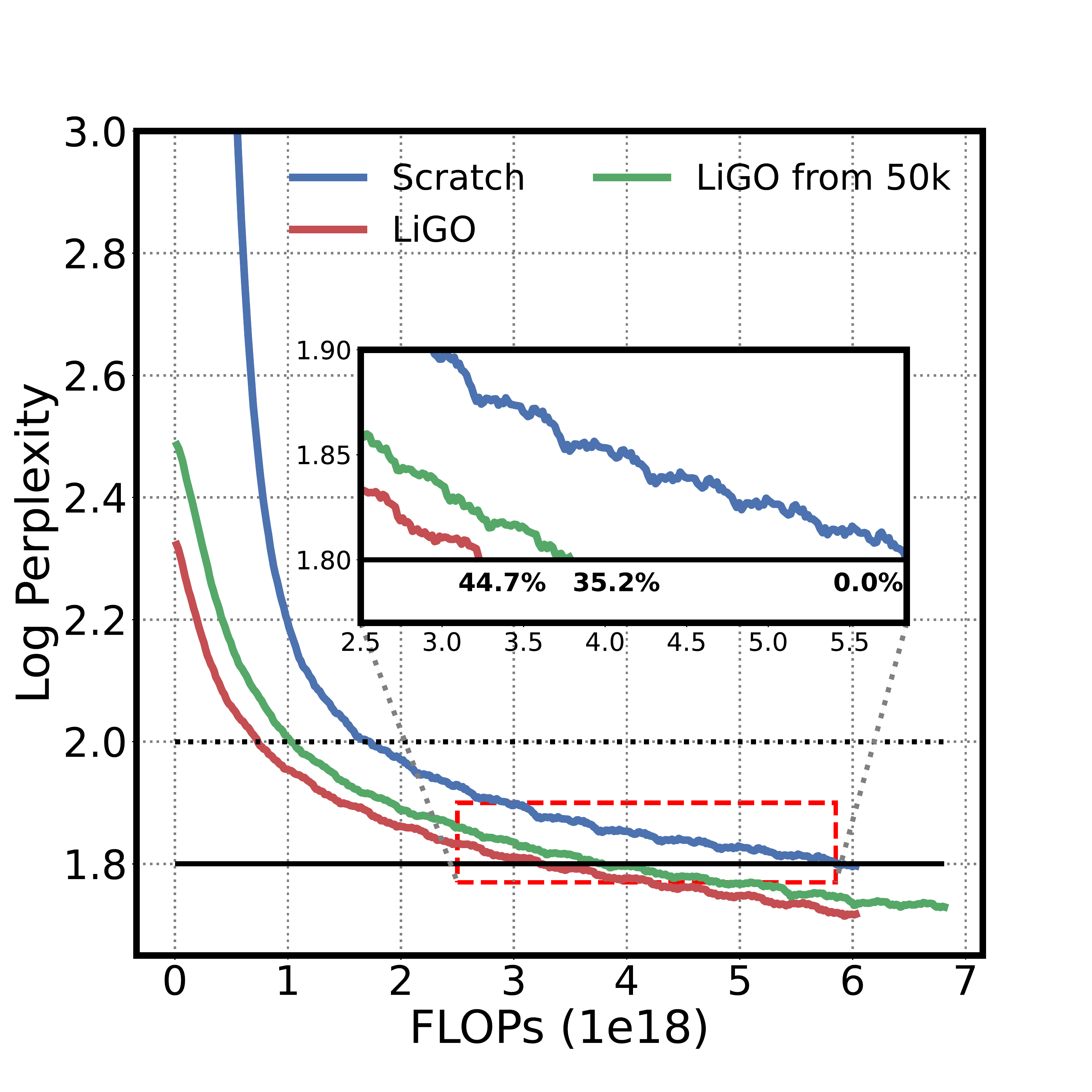} \vspace{-7mm}
         \caption{\scriptsize BERT-Small$\rightarrow$BERT-Base}
     \end{subfigure}
     \hspace{-6mm}
     \begin{subfigure}[b]{0.5\textwidth}
         \centering
         \includegraphics[width=\textwidth]{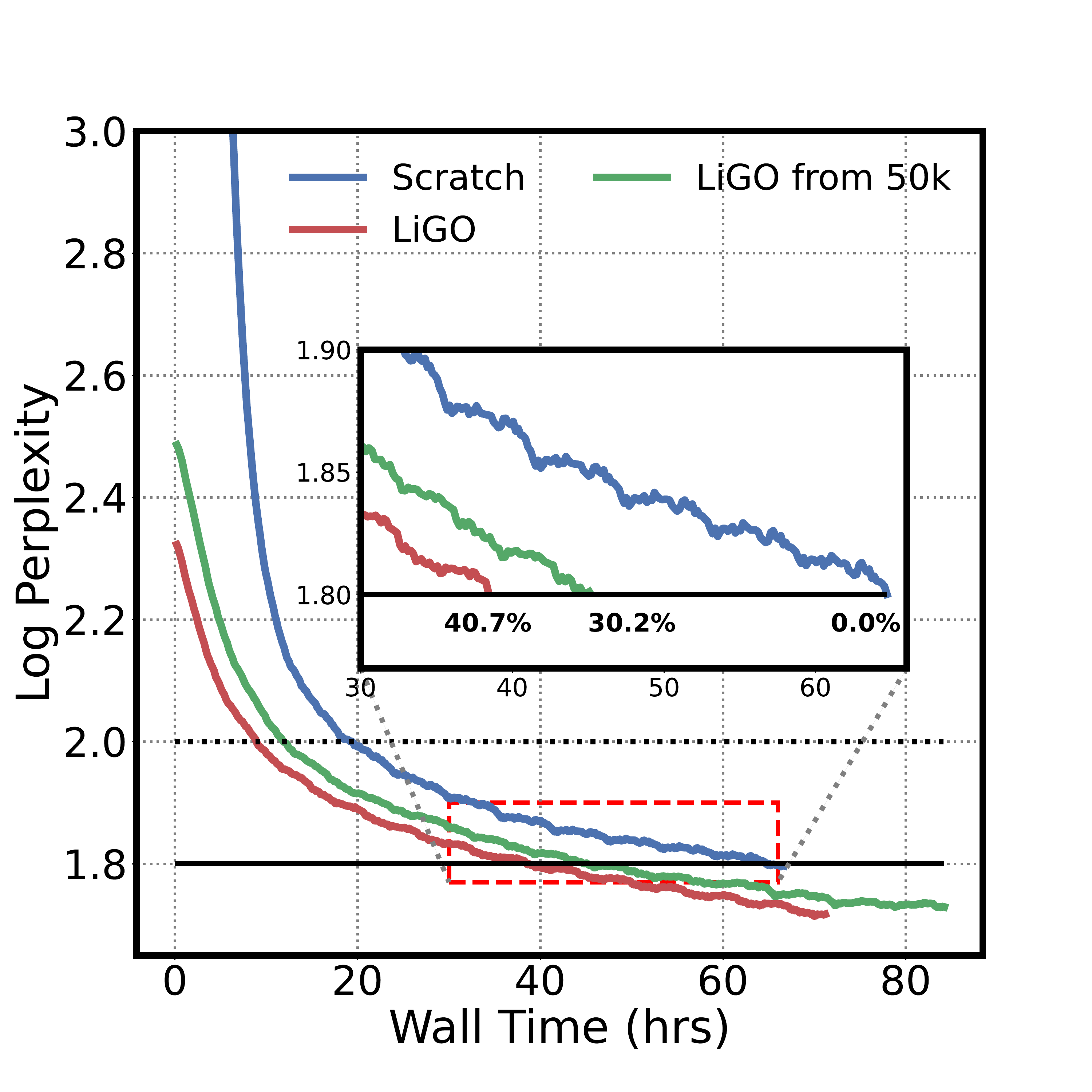} \vspace{-7mm}
         \caption{\scriptsize BERT-Small$\rightarrow$BERT-Base}
     \end{subfigure}
     \hspace{-6mm}
    \caption{\small {Results on BERT-Base by reusing BERT-Small trained for 50k steps.} Instead of training BERT-Base from fully trained BERT-Small, we run LiGO on BERT-Small trained with 50k steps. LiGO offers about 35.2\% savings in FLOPs and 30.2\% savings in wall time over the BERT-Base training from scratch.}
    \label{fig:bert_50k} 
\end{figure}

\section{Additional Experiments}

\subsection{Reusing smaller models trained for only few steps}
\label{sec:small-few}

LiGO focuses on utilizing the knowledge of smaller models that have already been pretrained and available. In this section, we investigate how LiGO can leverage smaller existing models that are only trained for few steps to accelerate training of a larger model. We perform an experiment on BERT-Base by reusing a BERT-Small trained for only 50k steps instead full training for 220k steps as used in our experiments. Figure~\ref{fig:bert_50k} shows that LiGO can still save 35.2\% savings in FLOPs and 30.2\% savings in wall time over the BERT-Base training from scratch.

\subsection{Results on CaiT}
\label{sec:cait}

\begin{figure}
     \centering 
     \vspace{-6mm}
     \begin{subfigure}[b]{0.5\textwidth} 
         \centering
         \includegraphics[width=\textwidth]{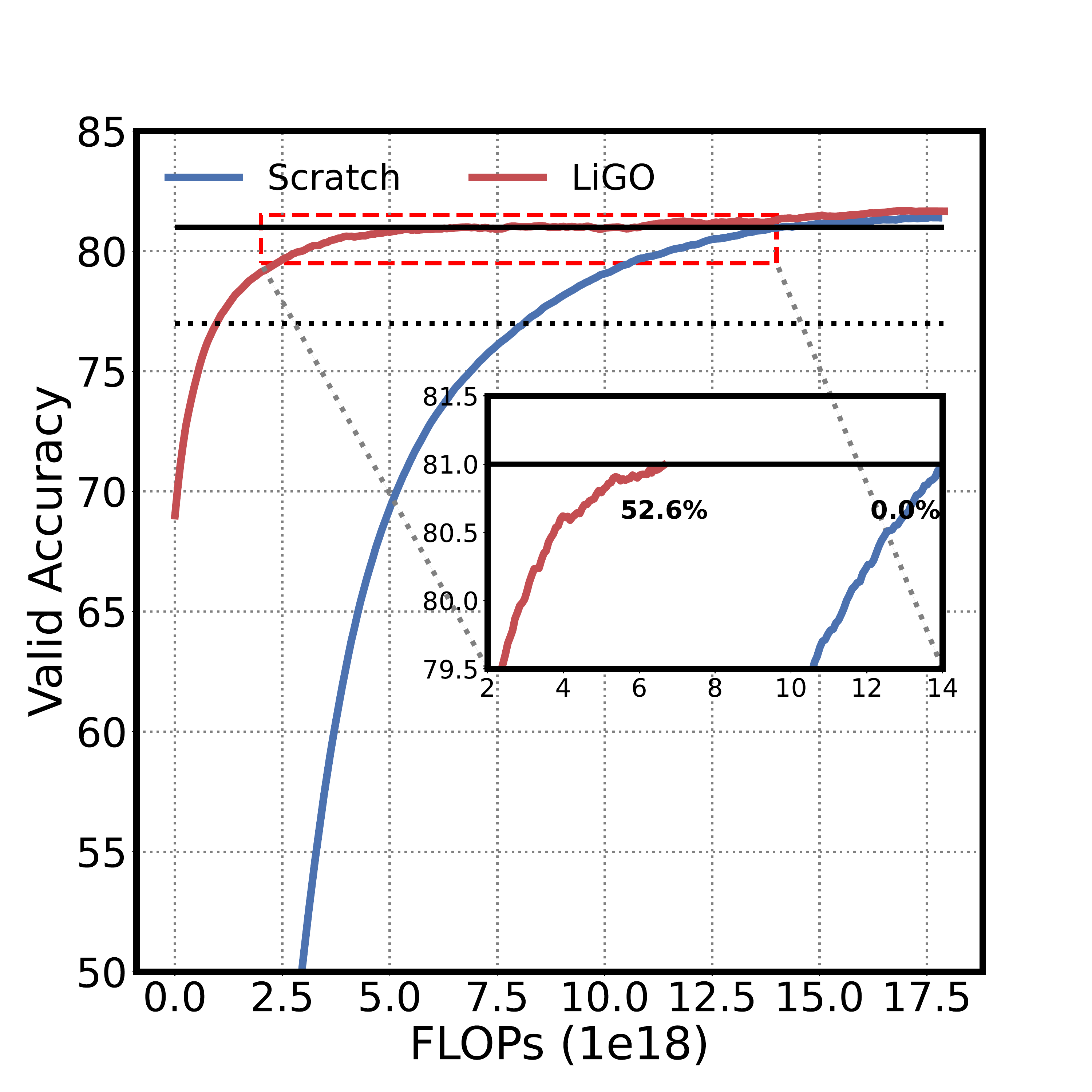} \vspace{-7mm}
         \caption{\scriptsize CaiT-XS$\rightarrow$CaiT-S}
     \end{subfigure}
     \hspace{-6mm}
     \begin{subfigure}[b]{0.5\textwidth}
         \centering
         \includegraphics[width=\textwidth]{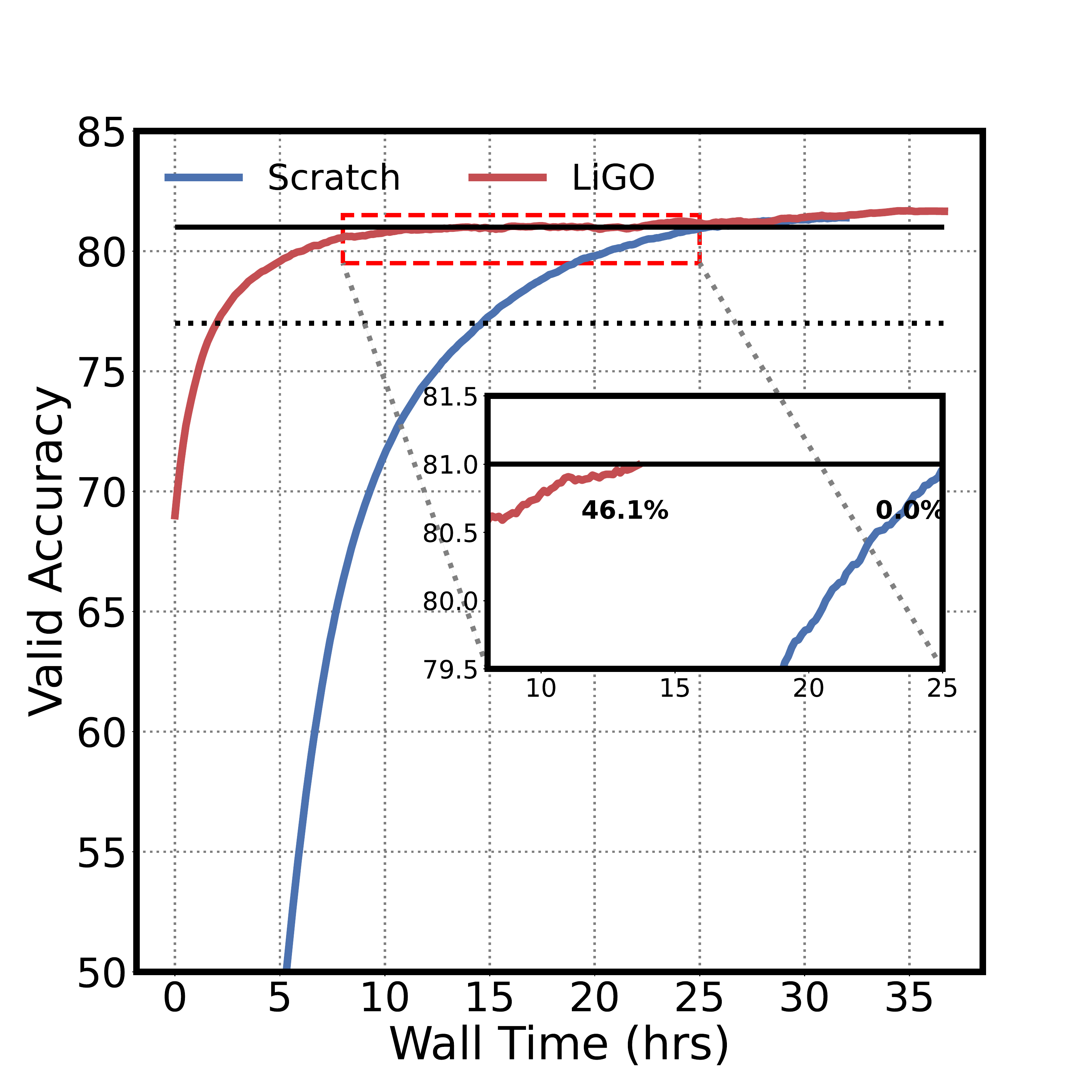} \vspace{-7mm}
         \caption{\scriptsize CaiT-XS$\rightarrow$CaiT-S}
     \end{subfigure} 
     \hspace{-6mm}\vspace{-2mm}
    \caption{\small {Results on CaiT.} (a) Accuracy vs. flops and (b) accuracy vs. wall time for training CaiT-S. \ours saves flops by $52.6\%$ and wall time by $46.1\%$ over training from scratch on ImageNet.}
    \label{fig:cait} 
\end{figure}

In addition to DeiT~\citep{touvron2021training}, we perform additional experiments with CaiT~\citep{touvron2021going} on ImageNet and find that while reusing CaiT-XS, LiGO offers about 52.6\% savings in FLOPs and 46.1\% savings in wall time over the CaiT-S training from scratch (see Figure~\ref{fig:cait}). 

\subsection{Task-specific finetuning with LiGO initialization without further pretraining}
\label{sec:glue_lego}

We perform additional experiments by directly finetuning BERT-Base initialized by LiGO (from BERT-Small) without any further pretraining. We observe in Table~\ref{table:glue_lego_init} that the LiGO-initialized model can benefit downstream tasks compared to BERT-Small trained from scratch (1st row vs 2nd row). 

\begin{table}
\definecolor{Gray}{gray}{0.90}
\newcolumntype{a}{>{\columncolor{Gray}}c}
    \caption{\small GLUE performance of different LiGO models. All of the results are based on BERT-Base models with BERT-Small as the base model for LiGO optimization.} \vspace{-4mm}
    \label{table:glue_lego_init}
    \begin{center}
    \resizebox{\linewidth}{!}{
       \begin{tabular}{l|ccccccc|a}
             \Xhline{3\arrayrulewidth} 
              \multirow{2}{*}{\textbf{Method}} & \textbf{SST-2} & \textbf{MNLI} & \textbf{MRPC} & \textbf{CoLA} & \textbf{QNLI} & \textbf{QQP} & \textbf{STS-B} & \textbf{Average} \\
     & \textbf{(Acc.)} & \textbf{(Acc.)} & \textbf{(Acc.)} & \textbf{(Acc.)} & \textbf{(Acc.)} & \textbf{(Acc.)} & \textbf{(Acc.)} & \textbf{(Acc.)} \\
             \Xhline{2\arrayrulewidth}  
            BERT-Small (Scratch)  & 87.21  & 77.56  & 82.11 & 59.93  & 85.06  & 85.82  & 84.99  & 80.38 \\
            BERT-Base (\ours Init)  & 88.15 & 77.62  & 82.53 & 60.70  & 85.79  &  86.65 & 85.83 & 81.04 \\
            BERT-Base (\ours Init + Pretrain)  & 88.42 & 79.29  & 84.31 & 62.09  & 88.07  &  88.81 & 87.00 & 82.57 \\
            BERT-Base (Scratch)  & 88.19 & 78.43  & 85.78 & 62.09  & 87.06  &  87.18 & 86.99 & 82.25 \\
            \Xhline{3\arrayrulewidth} 
        \end{tabular}
        }
    \end{center}
\end{table}

\subsection{GLUE Performance using AdapterFusion}
\label{sec:adapter}

\begin{table}
\definecolor{Gray}{gray}{0.90}
\newcolumntype{a}{>{\columncolor{Gray}}c}
    \caption{\small Downstream performance using AdapterFusion~\citep{pfeiffer2020adapterfusion} on GLUE Benchmark. All of the results are based on BERT-Base models trained using different baselines.} \vspace{-4mm}
    \label{table:adapter}
    \begin{center}
    \resizebox{\linewidth}{!}{
       \begin{tabular}{l|cc|ccccccc|a}
             \Xhline{3\arrayrulewidth} 
              \multirow{2}{*}{\textbf{Method}} & \textbf{Savings} & \textbf{Savings} & \textbf{SST-2} & \textbf{MNLI} & \textbf{MRPC} & \textbf{CoLA} & \textbf{QNLI} & \textbf{QQP} & \textbf{STS-B} & \textbf{Average} \\
    & \textbf{(FLOPs)} & \textbf{(Walltime)} & \textbf{(Acc.)} & \textbf{(Acc.)} & \textbf{(Acc.)} & \textbf{(Acc.)} & \textbf{(Acc.)} & \textbf{(Acc.)} & \textbf{(Acc.)} & \textbf{(Acc.)} \\
             \Xhline{2\arrayrulewidth}  
             Scratch & -- &  -- & 88.41 & 78.60  & 86.02 & 62.39  & 87.62  & 88.02  & 86.52 & 82.51 \\
            StackBERT & 34.1\%  & 33.3\% & 88.78  & 79.80  & 85.43 & 59.56  & 87.71  & 89.19  & 86.27  & 82.39 \\
            MSLT & 34.9\% & 30.0\%   & 88.41 & 78.35  & 83.15 & 63.97  & 86.19  & 88.20 & 86.42 & 82.10 \\
            KI & -5.7\% & -13.9\%  & 88.94 & 78.84  & 84.00 & 64.61  & 86.75  & 88.19  & 87.93 & 82.75 \\
            bert2BERT & 29.0\%  & 25.1\%  & 88.47 & 80.53  & 85.50  & 62.33 &  88.57 & 86.72 & 87.10 & 82.75\\
            \Xhline{1\arrayrulewidth}
            \ours & 44.7\%  & 40.5\% & 88.45 & 80.01  & 84.67 & 63.05  & 88.06  &  88.92 & 87.00 & 82.88 \\
            \Xhline{3\arrayrulewidth} 
        \end{tabular} \vspace{-6mm}
        }
    \end{center}
\end{table}

\ours is mainly proposed for improving efficiency of the pre-training stage and hence is compatible with various finetuning schemes like full model finetuning, adapters~\citep{houlsby2019parameter,pfeiffer2020adapterfusion} or prompt tuning~\citep{lester2021power,jia2022visual} for adaptation to downstream tasks. We test BERT-Base models trained using different baselines by using adapterfusion~\citep{pfeiffer2020adapterfusion} instead of full finetuning on GLUE benchmark. Table~\ref{table:adapter} shows that \ours also achieves on-par performance with model trained from scratch under adapter-based tuning with 44.7\% savings in FLOPs abd 40.5\% savings in wall time. This shows that \ours does not harm the model generalization capability when adapters are used as a parameter-efficient finetuning strategy for transferring a trained model to downstream datasets.

\subsection{Initial results on billion+ parameter models}
\label{sec:scalability}

Our extensive experiments on BERT~\citep{devlin2018bert}, RoBERTa~\citep{liu2019roberta}, GPT2~\citep{radford2019language}, DeiT~\citep{touvron2021training}  and CaiT~\citep{touvron2021going} show that LiGO can consistently improve transformer training efficiency over the traditional way of training from scratch across domains and model sizes. One interesting future direction of our work is scaling LiGO to very large models with parameters more than 100B, such as GPT3~\citep{brown2020language}. While we currently do not possess the compute resources for this extreme large-scale study, we perform a preliminary experiment on GPT2-1.5B~\citep{radford2019language} by using GPT2-Medium as the initialization. We train for 15k steps on C4 dataset~\citep{raffel2020exploring} and find that our proposed LiGO saves about 39\% computation cost (FLOPs) of training GPT2-1.5B from scratch to reach the same log perplexity (which is 3.3). We believe that it is imperative to study the extent to which the benefits of LiGO remain at the scale on which the modern large language models are trained. We hope to cover this in our future work.

\end{document}